\documentclass[runningheads]{llncs}

\usepackage[a4paper, total={165mm, 249mm}]{geometry}

\usepackage[T1]{fontenc}
\usepackage{graphicx}
\usepackage{microtype}

\usepackage{subcaption}
\usepackage{booktabs} 

\usepackage{amsmath}
\usepackage{amssymb}
\usepackage{mathtools}

\usepackage[pagebackref,breaklinks,colorlinks]{hyperref}

\usepackage[capitalize]{cleveref}
\crefname{section}{Sec.}{Secs.}
\Crefname{section}{Section}{Sections}
\Crefname{table}{Table}{Tables}
\crefname{table}{Tab.}{Tabs.}

\usepackage{comment}

\usepackage[textsize=tiny]{todonotes}
\usepackage[acronym]{glossaries}


\definecolor{fig1yellow}{HTML}{FDE724}
\definecolor{fig1purple}{HTML}{440154}
\newacronym{gb}{GB}{Guided Backpropagation}
\newacronym{gi}{GI}{Gradient $\times$ Input}
\newacronym{ig}{IG}{Integrated Gradients}
\newacronym{lrp}{LRP}{Layer-wise Relevance Propagation}
\newacronym{mse}{MSE}{Mean-Squared Error}
\newacronym{ssim}{SSIM}{Structural Similarity Index Measure}

\begin{document}
\title{Shortcomings of Top-Down Randomization-Based Sanity Checks for Evaluations of Deep Neural Network Explanations}
\titlerunning{Shortcomings of Sanity Checks for Evaluation of Explanations}

\author{Alexander Binder\inst{1,2}\orcidID{0000-0001-9605-6209} \and
Leander Weber\inst{3} \and
Sebastian Lapuschkin\inst{3}\orcidID{0000-0002-0762-7258} \and
Gr{\'e}goire Montavon\inst{4,5}\orcidID{0000-0001-7243-6186}\and
Klaus-Robert M{\"u}ller\inst{5,6,7,8} \and 
Wojciech Samek\inst{3,5,6}\orcidID{0000-0002-6283-3265}}

\authorrunning{Binder et al.}

\institute{ICT Cluster, Singapore Institute of Technology (SIT), Singapore 138683 \and
University of Oslo, 0316 Oslo, Norway \and
Fraunhofer Heinrich Hertz Institute, 10587 Berlin, Germany\and 
Freie Universit{\"a}t Berlin, 14195 Berlin, Germany\and
BIFOLD -- Berlin Institute for the Foundations of Learning and Data, 10587 Berlin, Germany\and 
Technische Universit{\"a}t Berlin, 10587 Berlin, Germany\and 
Korea University, Seoul 136-713, Republic of Korea\and 
Max Planck Institut für Informatik, 66123 Saarbr{\"u}cken, Germany}

\maketitle              

\begin{abstract}
While the evaluation of explanations is an important step towards trustworthy models, it needs to be done carefully, and the employed metrics need to be well-understood. Specifically model randomization testing is often overestimated and regarded as a sole criterion for selecting or discarding certain explanation methods. To address shortcomings of this test, we start by observing an experimental gap in the ranking of explanation methods between randomization-based sanity checks \cite{DBLP:conf/nips/AdebayoGMGHK18} and model output faithfulness measures (e.g.\ \cite{DBLP:journals/tnn/SamekBMLM17}). We identify limitations of model-randomization-based sanity checks for the purpose of evaluating explanations. 
Firstly, we show that uninformative attribution maps created with zero pixel-wise covariance easily achieve high scores in this type of checks. Secondly, we show that top-down model randomization preserves scales of forward pass activations with high probability. That is, channels with large activations have a high probility to contribute strongly to the output, even after randomization of the network on top of them.  
Hence, explanations after randomization can only be expected to differ to a certain extent. This explains the observed experimental gap. In summary, these results demonstrate the inadequacy of model-randomization-based sanity checks as a criterion to rank attribution methods. 
\end{abstract}

\section{Introduction}
\label{sec:intro}

Parallel to the progressively astounding performances of machine learning techniques, especially deep learning methods, in solving even the most complex tasks, the transparency, trustworthiness, and lack of interpretability of these techniques has increasingly been called into question \cite{DBLP:journals/csr/HuangKRSSTWY20,DBLP:journals/corr/abs-2103-10689,Lapuschkin2019}. As potential solutions to these issues, a vast number of XAI methods have been developed in recent years \cite{DBLP:journals/pieee/SamekMLAM21}, that aim to explain a model's behavior, for instance, by (locally) attributing importance scores to features of singular input samples, indicating how (much) these features influence a specific model decision \cite{simonyan2013deep,DBLP:journals/corr/SundararajanTY17,GuidebackPropagation:springenberg2014striving,bach2015pixel}. 
However, the scores obtained for different attribution map methods tend to differ significantly, and the question arises how well each explains model decisions. This is generally not answered easily, as there are a number of desirable properties proposed to be fulfilled by these attributions, such as localization on relevant objects \cite{DBLP:journals/ijcv/ZhangBLBSS18,DBLP:conf/fuzzIEEE/Arias-DuartPGG22,DBLP:journals/inffus/ArrasOS22} or faithfulness to the model output
\cite{DBLP:journals/tnn/SamekBMLM17,DBLP:conf/accv/AgarwalN20,DBLP:conf/ijcai/BhattWM20}, among others, with several quantitative tests having been proposed for each.

In parallel to these empirical evaluations, several works have proposed that explanations should fulfill a certain number of `axioms' or `unit tests' \cite{DBLP:journals/corr/SundararajanTY17,Kindermans2019,Montavon2019gradvspropagationbasedcomparison,DBLP:conf/nips/AdebayoGMGHK18}, which need to hold universally for a method to be considered good or valid. We place our focus on the  model-randomization-based sanity checks \cite{DBLP:conf/nips/AdebayoGMGHK18}, which state that the explanation should be sensitive to a random permutation of parameters at one or more layers in the network. Specifically, the authors proposed to apply measures such as \gls{ssim} \cite{wang2004image} between attribution maps obtained from the original model and a derived model for which the top-layers are randomized. The idea is to require that methods used to compute attribution maps should exhibit a large change when the neural network model  --- i.e., its defining/learned parameter set --- is randomized from the top. The authors of~\cite{DBLP:conf/nips/AdebayoGMGHK18,DBLP:conf/icml/SixtGL20} suggested to discard attribution map methods which perform poorly under this test --- i.e., have a high \gls{ssim} measure between attributions obtained with the original and the randomized model --- under the assumption that those XAI methods are not affected by the model's learned parameters.

However, we observe a significant experimental gap between top-down randomization checks when used as an evaluation measure, and occlusion-based evaluations of model faithfulness such as region perturbation \cite{DBLP:journals/tnn/SamekBMLM17}. Concretely, \gls{gb}~\cite{GuidebackPropagation:springenberg2014striving} and \gls{lrp}~\cite{bach2015pixel} exhibit low randomization scores under the first type of measure and yet clearly outperform several gradient-based methods in occlusion-based evaluations. We are interested to resolve this discrepancy.

We identify two shortcomings of top-down randomization checks when used as a measure of explanation quality.
Firstly, we show that uninformative attribution maps created with zero pixel-wise covariance --- e.g., attribution maps generated from random noise --- easily achieve high scores in top-down randomization checks. Effectively, this makes top-down randomization checks favor attribution maps which are affected by gradient shattering noise \cite{DBLP:conf/icml/BalduzziFLLMM17}.

Secondly, we argue that the randomization-based sanity checks may always reward explanations that change under randomization, even when such randomizations do not affect the output of the model (and its invariances) significantly. Such invariance to randomization may result, e.g., from the presence of skip connections in the model, but also due to the fact that randomization may be insufficient to strongly alter the spatial distribution of activations in adjacent layers, something that we explain by the multiplicity and redundancy of positive activation paths between adjacent layers in ReLU networks.

Along with our contributed theoretical insights and supporting experiments, the present note warns against an unreflected use of model-randomization-based sanity checks as a sole criterion for selecting or dismissing a particular attribution technique, and proposes several directions to enable a more precise and informative use of randomization-based sanity checks for assessing how XAI performs on practical ML models.

\subsection{Related work}

\paragraph{Evaluating Attributions.}
Comparing different attribution methods qualitatively is not sufficiently objective, and for that reason, a vast number of quantitative tests have been proposed in the past in order to measure explanation quality, focusing on different desirable properties of attributions. Complexity tests \cite{DBLP:conf/icml/ChalasaniC00J20,DBLP:conf/ijcai/BhattWM20,DBLP:journals/corr/abs-2007-07584} advocate for sparse and easily understandable explanations, while robustness tests \cite{DBLP:conf/nips/Alvarez-MelisJ18,DBLP:conf/ijcai/BhattWM20,DBLP:journals/dsp/MontavonSM18} measure how much attributions change between similar samples or with slight perturbations to the input. Under the assumption of an available ground truth explanation (e.g., a segmentation mask localizing the object(s) of interest), localization tests \cite{DBLP:journals/ijcv/ZhangBLBSS18,DBLP:conf/fuzzIEEE/Arias-DuartPGG22,DBLP:journals/inffus/ArrasOS22} ask for attributed values to be concentrated on this ground truth area. Faithfulness tests \cite{DBLP:journals/tnn/SamekBMLM17,DBLP:conf/nips/Alvarez-MelisJ18,DBLP:conf/ijcai/BhattWM20} compare the effect of perturbing certain input features on the model's prediction to the values attributed to those features, so that optimally perturbing the features with the largest attribution values also affects the model prediction the most. Model randomization tests \cite{DBLP:conf/nips/AdebayoGMGHK18}, which are the main focus of this work, progressively randomize the model, stating that attributions should change significantly with ongoing randomization.

\paragraph{Caveats of Model Randomization Tests.}
The authors of \cite{DBLP:conf/nips/AdebayoGMGHK18} find that a large number of attribution methods seems to be invariant to model parameters, as their explanations do not change significantly under cascading model randomization. However, various aspects of these sanity checks have recently been called into question: 
For instance, these tests were performed on unsigned attributions. Specifically for \gls{ig}~\cite{DBLP:journals/corr/SundararajanTY17}, \cite{DBLP:journals/corr/abs-1806-04205} show that if the signed attributions are tested instead, this method suddenly passes cascading model randomization instead of failing. This indicates that some of the results obtained in \cite{DBLP:conf/nips/AdebayoGMGHK18} for attribution methods where the sign carries meaning may be skewed due to the employed preprocessing.
Furthermore, \cite{DBLP:journals/corr/abs-2110-14297} argue for the distribution-dependence of model-randomization based sanity checks. The authors demonstrate that some methods seem to fail the sanity checks in \cite{DBLP:conf/nips/AdebayoGMGHK18} due to the choice of task, rather than invariance to model parameters. A similar observation is made by \cite{DBLP:journals/corr/abs-2106-07475}, who find that the same attribution methods can perform very differently under model randomization sanity checks when the model and task are varied. Note that the underlying assumption of \cite{DBLP:conf/nips/AdebayoGMGHK18} --- that ``good'' attribution methods should be sensitive to model parameters --- is not called into question here. Rather, we posit that methods can fail the model randomization sanity checks for other reasons than invariance to model parameters. 

\section{Observation: The Gap between Randomization-based Sanity Checks and Measures of Model Faithfulness}
\label{sec:faithfulness_comp}

Based on the assumption that a good explanation should attribute the highest values to the features that most affect a model's predictions, occlusion-type measures of model faithfulness \cite{DBLP:journals/tnn/SamekBMLM17,DBLP:conf/accv/AgarwalN20,DBLP:conf/ijcai/BhattWM20} aim to quantify explanation quality by measuring the correlation between attribution map scores and changes of the model prediction under occlusion.

As such, these tests progressively randomize the data, and can thus be understood as complementary to model-randomization-based sanity checks, which progressively randomize the model. Consequently, model-randomization-based sanity checks depart towards the implausible due to partially randomized prediction models, while occlusion-based testing departs towards the implausible due to partially modified and outlier-like images. As both types of test apply the same intuition of increasing randomization to different variables (model and data) that (should) influence attribution maps, it is meaningful to compare their results, and determine whether both tests agree in terms of explanation quality.

In the following, we therefore empirically compare the scores measured by randomization-based sanity checks to the respective scores measured by faithfulness testing, for several methods. We use a variant of occlusion in the spirit of \cite{DBLP:conf/accv/AgarwalN20} which replaces a region with a blurred copy to stay closer to the data manifold. Details on our experimental setup can be found in the Supplement (Section \ref{ssec:experiment_details:faithfulness_comp}).

As already known from \cite{DBLP:conf/nips/AdebayoGMGHK18} (and also shown in the Supplement, see Figures \ref{fig:ssim2} and \ref{fig:mse}), Guided Backpropagation \cite{GuidebackPropagation:springenberg2014striving} performs poorly under model randomization-based sanity checks when compared to three gradient-based attribution methods, namely the Gradient itself, \gls{gi} and \gls{ig}. However, when measuring model faithfulness by a modified iterative occlusion test similar to \cite{DBLP:journals/tnn/SamekBMLM17} on the attribution maps, we find that the same \gls{gb}, and also several \gls{lrp} variants outperform the Gradient, Gradient $\times$ Input and Integrated Gradient substantially, as can be seen in Figure \ref{fig:perturb_gbpvsgrad_k15} and in the Supplement in Section \ref{ssec:morefaithfulessexperiments}.

\begin{figure*}[!ht]
\centering     

\includegraphics[width=0.99\linewidth]{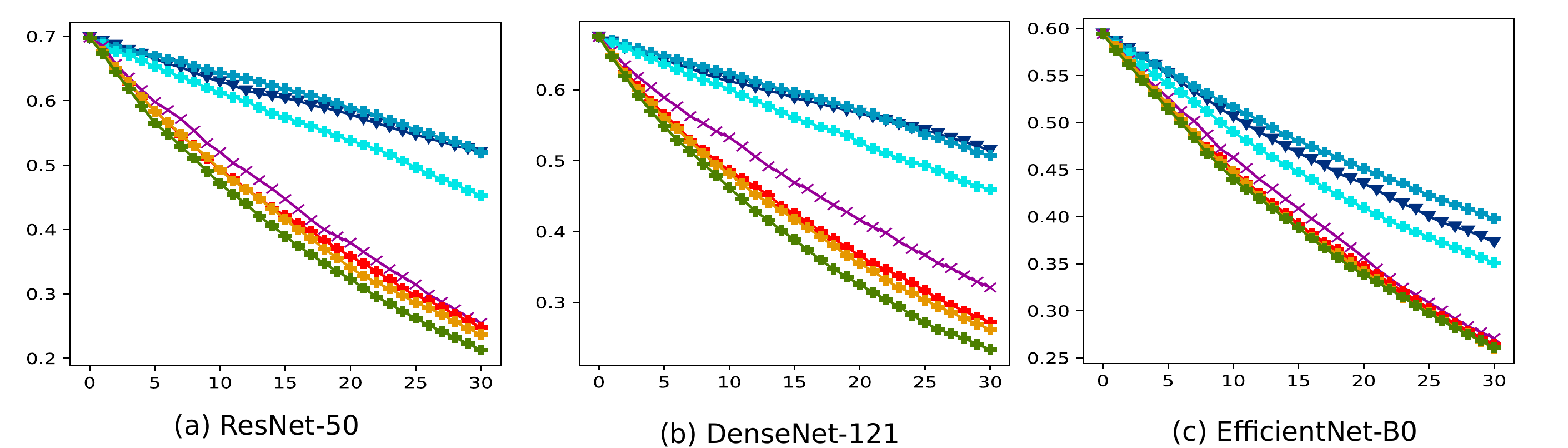}

\caption{\label{fig:perturb_gbpvsgrad_k15} Results of model faithfulness via occlusion testing, by measuring the correlation to iterative occlusion with a kernel size of 15. The comparison shows the Gradient, \glsdesc{gi}, \glsdesc{ig}, \glsdesc{gb} and several variants of \gls{lrp}. The occlusion is performed by taking patches from a blurred copy of the original image. The figure shows the softmax scores. Legend is given in Figure \ref{fig:ssim}. \textit{Lower is better.}}
\end{figure*}

\begin{figure*}[ht]
\centering     

\includegraphics[width=0.99\linewidth]{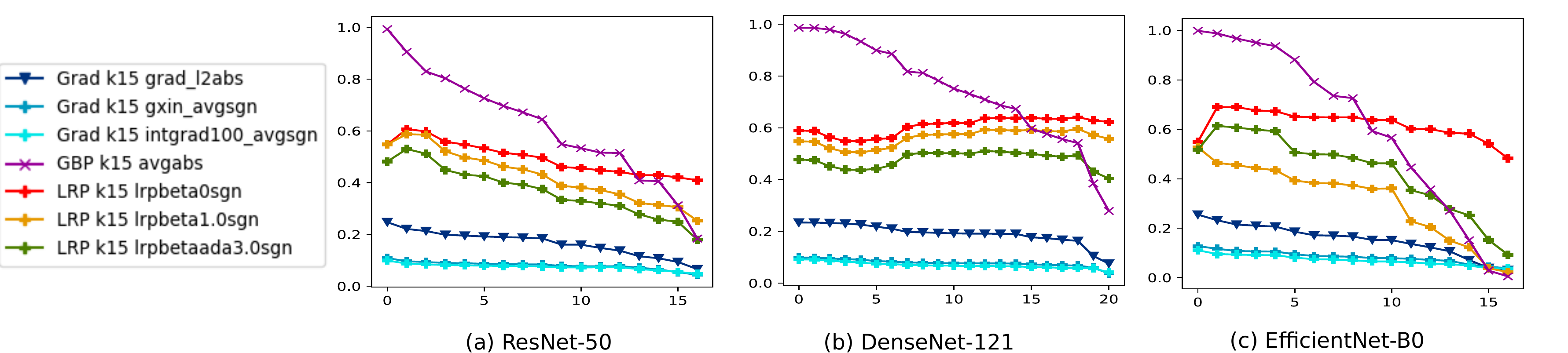}
\caption{\label{fig:ssim} Results of top-down model randomization-based sanity checks with \gls{ssim} after normalization of attribution maps by their second moment. Note that the model randomization experiment uses no kernel size. \textit{Lower is better.}}
\end{figure*}

Due to the conceptual parallels between both tests discussed above, we find this extreme divergence surprising, and are interested in resolving this gap. Therefore, we will investigate the underlying reasons for this theoretically and experimentally in the following sections.

\section{The Sensitivity of SSIM Minimization Towards Noise}
\label{sec:ssimsens}
The model-randomization-based sanity checks proposed by \cite{DBLP:conf/nips/AdebayoGMGHK18} use \gls{ssim} as a measure of distance between attribution maps. As we will demonstrate in this section, \gls{ssim} (and, by extension, several other distance measures, see Supplement Sections \ref{ssec:spearmansens} and \ref{sec:msesens}) may be flawed in this application, with randomly generated attributions scoring optimally.
We consider a setup where we use two different models, yielding two different attribution maps $A$ and $B$. The following considerations apply to patches of the two attribution maps or whole attribution maps.

We can identify a fundamental issue: The \gls{ssim} between any two attribution maps can be minimized by a statistically uncorrelated random attribution process. This is due to the reason that the \gls{ssim} contains a product where one term relies on a covariance between two patches, see e.g.~Equation 6 in \cite{nilsson2020understanding}, which is reproduced here: 
\begin{equation}
\label{eq:ssimeq}
    \frac{2\mu_A \mu_B +C_1}{ \mu^2_A + \mu^2_B +C_1  } \frac{2\sigma_{AB} +C_2}{ \sigma^2_A + \sigma^2_B +C_2}\, .
\end{equation}

In the above term, $\mu_A$, $\mu_B$ and $\sigma^2_A$, $\sigma^2_B$ are the per-patch means and variances for one patch location computed for two different input attribution maps $A$ and $B$, $\sigma_{AB}$ their covariance. $C_1$ and $C_2$ are constants depending on the possible input range of $A$ and $B$, e.g.~$[0,1]$ or $[0,255]$.

In the following, we will consider attribution maps within the framework of random variables.
The next theorem is applied to patches of two attribution maps $A$, $B$ coming from different prediction mappings, such as those obtained by a model and a partially randomized model. The patches are extracted at the same position of an image. 
\begin{theorem}
\label{Theor.1}
Consider the set of all random variables with expected means $\mu_A$, $\mu_B$ for each image patch being fixed 
and with non-negative expected covariance for each patch $\sigma_{AB} \ge 0$. 

Then the expected \gls{ssim} absolute value is minimized by a random variable with zero covariance. In particular, an upper bound on the minimum is given by \begin{align}
\frac{C_2}{\sigma^2_A + \sigma^2_B +C_2} \, .
\end{align}

\end{theorem}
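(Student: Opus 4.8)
The plan is to exploit the product form of the \gls{ssim} expression \eqref{eq:ssimeq}: it factors as a ``luminance'' term $M = \frac{2\mu_A\mu_B + C_1}{\mu_A^2 + \mu_B^2 + C_1}$ depending only on the (fixed) means, times a ``contrast--structure'' term $F = \frac{2\sigma_{AB} + C_2}{\sigma_A^2 + \sigma_B^2 + C_2}$ carrying all the dependence on the second moments, so that $\mathrm{SSIM} = M F$. Because $\mu_A, \mu_B$ are held fixed across the admissible family of random variables, $M$ is a constant there; hence minimizing $|\mathrm{SSIM}| = |M|\,|F|$ reduces to minimizing $|F|$, the degenerate case $M = 0$ (i.e.\ $2\mu_A\mu_B = -C_1$) being immediate since then $\mathrm{SSIM}\equiv 0$.

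First I would check $|M| \le 1$. Since $C_1 > 0$ the denominator $\mu_A^2 + \mu_B^2 + C_1$ is strictly positive, and the two elementary inequalities $2\mu_A\mu_B + C_1 \le \mu_A^2 + \mu_B^2 + C_1$ (i.e.\ $(\mu_A - \mu_B)^2 \ge 0$) and $2\mu_A\mu_B + C_1 \ge -(\mu_A^2 + \mu_B^2 + C_1)$ (i.e.\ $(\mu_A + \mu_B)^2 + 2C_1 \ge 0$) sandwich the numerator between $\pm$ the denominator. Therefore $|\mathrm{SSIM}| \le |F|$ pointwise, so it suffices to control $|F|$ on the feasible set.

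Next I would analyze $F$ on the feasible region $\sigma_{AB} \ge 0$. Since $C_2 > 0$ and $\sigma_A^2, \sigma_B^2 \ge 0$, the denominator is strictly positive and $2\sigma_{AB} + C_2 > 0$, so $F > 0$ and $|F| = F$; moreover $F$ is affine in $\sigma_{AB}$ with positive slope $2/(\sigma_A^2 + \sigma_B^2 + C_2)$, hence strictly increasing. Over $\sigma_{AB} \in [0,\infty)$ it is therefore minimized at the boundary point $\sigma_{AB} = 0$, with minimal value $\frac{C_2}{\sigma_A^2 + \sigma_B^2 + C_2}$; this value is realized by any admissible pair with zero (expected, per-patch) covariance --- for instance attributions drawn from noise statistically independent of $B$ --- which proves the minimizer has zero covariance. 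Combining with $|M| \le 1$ gives, at such a minimizer, $|\mathrm{SSIM}| = |M|\,F \le \frac{C_2}{\sigma_A^2 + \sigma_B^2 + C_2}$, the claimed upper bound on the minimum.

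The only point requiring care is the bookkeeping around the word ``expected'': one has to keep in mind that $\mu_A,\mu_B,\sigma_A^2,\sigma_B^2,\sigma_{AB}$ denote the expected per-patch moments of the random attribution maps, and that minimizing the two factors simultaneously is legitimate here. It is --- for a structural rather than accidental reason: the luminance factor is a fixed constant of controlled size ($|M|\le 1$) while the contrast--structure factor is sign-definite on the feasible region, so $|M|\,|F|$ is minimized exactly where $|F|$ is. No information beyond the first two moments, and no concentration argument, is needed, and the Cauchy--Schwarz constraint $\sigma_{AB} \le \sigma_A\sigma_B$ never enters, since only the lower constraint $\sigma_{AB} \ge 0$ is active in the minimization.
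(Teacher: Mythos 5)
Your proof is correct and takes essentially the same route as the paper's: factor the SSIM into the fixed luminance term and the covariance-dependent term, observe that the latter is positive and increasing in $\sigma_{AB}$ so its minimum over $\sigma_{AB}\ge 0$ is attained at $\sigma_{AB}=0$ with value $C_2/(\sigma_A^2+\sigma_B^2+C_2)$, and bound the luminance factor by $1$ in absolute value via $\pm 2\mu_A\mu_B \le \mu_A^2+\mu_B^2$. No changes needed.
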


The proof is in the Supplement in Section \ref{ssec:app:proofofthm1}. This theorem has two consequences. 

Firstly, even if we question the requirements of the theorem and thus allow negative patch correlations $\sigma_{AB} < 0$, the observation remains valid that we can obtain \emph{very small} expected absolute values of the \gls{ssim} measure by using any randomized attribution map which is statistically independent over pixels of input images and therefore not informative. 

Secondly, the proof of Theorem \ref{Theor.1} is not affected by division of the term $\sigma_{AB}$ by constants. Consequently, when using normalization on the attribution maps, the result from Theorem \ref{Theor.1} still holds that attributions with zero patch-wise correlation attain very low scores among all normalized attribution maps.

Interestingly, this explains why certain gradient-based methods with rather noisy attribution maps pass this type of model randomization-based sanity checks with the best scores in the sense of lowest \gls{ssim} values. Gradients are known for ReLU-networks to have statistics which resemble noise processes, as has been shown in \cite{DBLP:conf/icml/BalduzziFLLMM17}. This carries over to Gradient $\times$ Input and to a lesser degree to smoothed versions like Integrated Gradient \cite{DBLP:journals/corr/SundararajanTY17} and SmoothGrad \cite{DBLP:journals/corr/SmilkovTKVW17}.

Theorem \ref{Theor.1} has one important consequence: One cannot disentangle the effects of model randomization from the amount of noise in an attribution process in model randomization sanity checks. Therefore it is problematic to use this type of model-randomization-based sanity check to compare or rank different attribution maps against each other.

\section{Randomization Leaves the Model and Explanations Partly Unchanged}

The section above has highlighted that explanations may score highly in the sanity check due to including further random factors in the explanation, which contradicts the principle that an explanation should faithfully depict the function to explain and not a random component. This concerns the measurement process after randomization. 

In this section, we review the top-down randomization process itself. We will explain why it actually makes sense to underperform in model-based randomization checks, contrary to a first glance intuition.

Specifically, we will show that certain activation statistics of the network are only mildly affected by top-down randomization and thus cause low-noise explanations before and after randomization to retain some similarity. 

\subsection{Preservation of Irrelevance in Explanations}

We first start with an empirical observation that features found to be irrelevant for a given task tend to remain irrelevant after randomization. Our experiment is based on torchvision's VGG-16 pretrained model, where we keep the mapping from the input to layer $12$ unchanged and randomize the remaining layers. We apply LRP with the $z^\mathcal{B}$-rule in the first layer, redistribution in proportion to squared activations in the pooling layers, LRP-$\gamma$ in the convolution layers with layer-wise exponential decay from $\gamma=1.0$ to $\gamma=0.01$, and LRP-0 in the dense layers. We inspect in Fig.\ \ref{fig:impl_invariance-gen} (right) explanations produced before and after randomization. 

We observe that many spatial structures are retained before and after randomization, specifically, relevant or negatively contributing pixels are found before and after randomization on the facial and hat features, on the outline of the fish, on the finger contours, on the flagstick, on the ball, on the hole, etc. Conversely, some features remain irrelevant before and after randomization, e.g.\ the lake surface, the skin and the grass. Such similarities lead to similarity scores before and after randomization that remain significantly above zero, especially if considering heatmaps absolute scores.

We now provide a formal argument showing that for an explanation to be faithful, some irrelevant features must \textit{necessarily} remain irrelevant after randomization, thereby raising the similarity score. Let us denote by $\theta_R$ the parameters that are randomized and write the model as a composition of the non-randomized and the randomized part:
\begin{align}
   f(x,\theta_R) = g(\phi(x),\theta_R) \ . \label{eq:twoparts}
\end{align}
The function is depicted in Figure \ref{fig:somerandomizedlayers}. The first part $\phi$ contains the non-randomized layers (and can be understood as a feature extractor). The second part $g$ contains the randomized layers (and can be interpreted as the classifier). We make the following two observations:

\begin{figure}[t]
\centering
\includegraphics[width=0.6\linewidth]{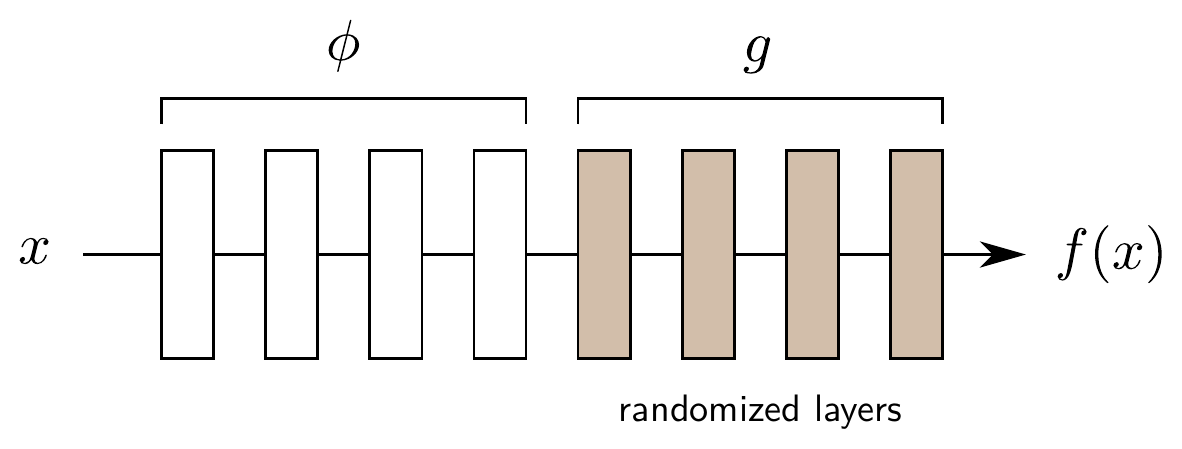}
\caption{\label{fig:somerandomizedlayers} Diagram of a neural network where the top few layers have been randomized (shown in brown).}
\end{figure}
\begin{figure*}[t!]
    \centering
    \includegraphics[width=.95\textwidth]{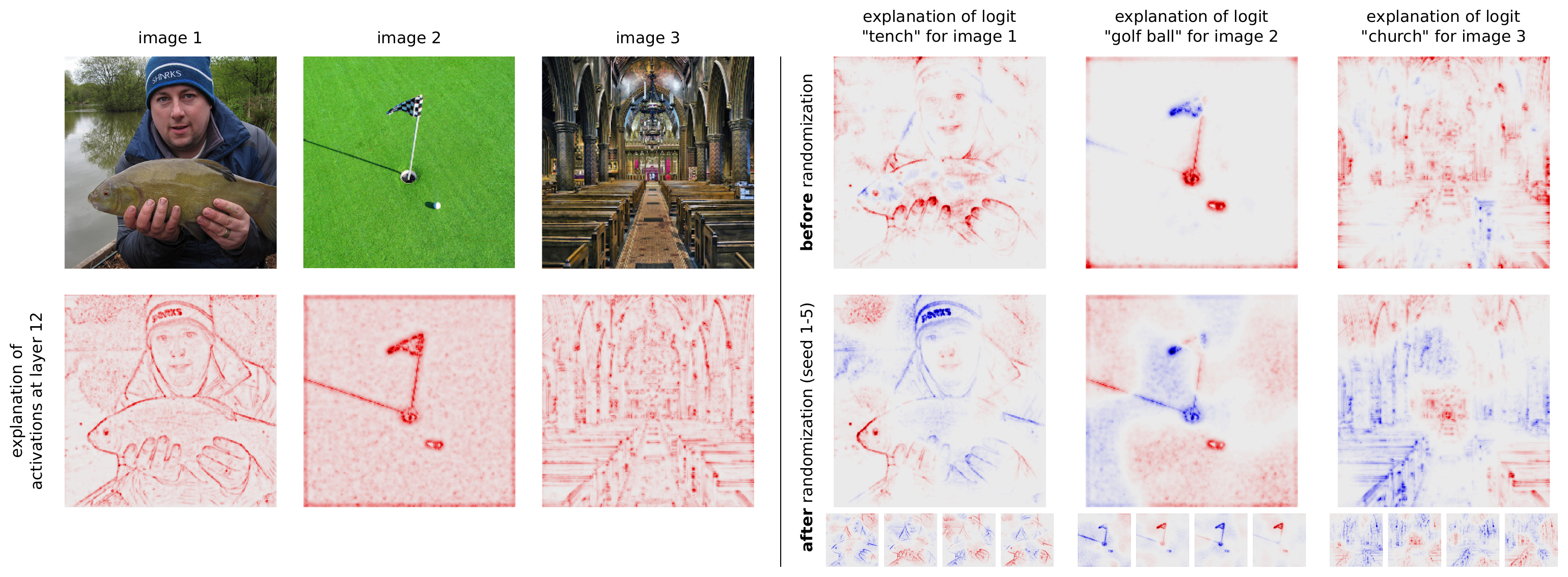}
    \caption{Experiment where one randomizes parameters of torchvision's VGG-16 pretrained model between layer $12$ and the model output, and compute LRP explanations (results shown for 5 different seeds). Explanations of the neural network output before and after randomization for the true class are shown on the right. Explanation of activations at layer 12 are shown on the bottom left.}
    \label{fig:impl_invariance-gen}
\end{figure*}

\begin{enumerate}
    \item If the function $\phi$ is does not respond to some input feature $x_i$, then $g \circ \phi$ also should not respond to $x_i$ (no matter whether the function $g$ is the classifier or its randomized variant),
    \item If $g \circ \phi$ does not respond to $x_i$ then an attribution technique should reflect this lack of response by assigning a score $0$ to that feature. We note that this property of attributing low relevance input features to which the model does not respond is present in common explanation methods, for example, methods such as \gls{ig}, where the gradient occurs as a multiplicative factor, LRP-type explanations, where relevance propagates mostly along connections with non-zero weights, or explanations derived from axioms such as the Shapley value whose `null-player' axiom also relates explanation properties to model unresponsiveness.
\end{enumerate}
These two observations can be summarized in the following logical clause:
\begin{align}
&\phi(x) \text{ unresponsive to } x_i\nonumber\\
&\qquad \Rightarrow \forall g: g \circ \phi(x) \text{ unresponsive to } x_i\nonumber\nonumber\\
& \qquad \qquad \Rightarrow \forall g: \mathcal{E}_i\{g \circ \phi(x)\} \text{~small},
\end{align}
where $\mathcal{E}_i\{\cdot\}$ denotes the relevance of feature $x_i$ for explaining the prediction given as argument.
In other words, one should expect that any function $g$ (randomized or not) built on $\phi$ shares a similar pattern of low relevances, and such a pattern originates from the lack of response of $\phi$ to certain input features. Therefore, we conclude that a top-down randomization process as performed in \cite{DBLP:conf/nips/AdebayoGMGHK18} can only alter explanations to a limited extent, and only a less faithful (e.g.\ noisy) explanation would enable further improvement w.r.t.\ the top-down randomization metric.

\medskip

To verify that the explanation structure is indeed to some extent controlled by $\phi$, we compute explanations directly at the output of the function $\phi$ (sum of activations) and show the results in Figure \ref{fig:impl_invariance-gen} (bottom left). We observe a correlation between feature relevance w.r.t.\ those activations and feature relevance w.r.t.\ the model output. For example, the lake, the grass, or more generally uniform surfaces are already less relevant at the output of $\phi$, and continue to be so when considering the output of $g$. This is consistent with our theoretical argument that feature irrelevance of some features to classifier output $g$ is inherited to a significant degree from the feature map $\phi$.

\subsection{Preservation of a Baseline Explanation}
\label{ssec:additivebaseline}

\begin{figure*}[t!]
    \centering
    \includegraphics[scale=.5]{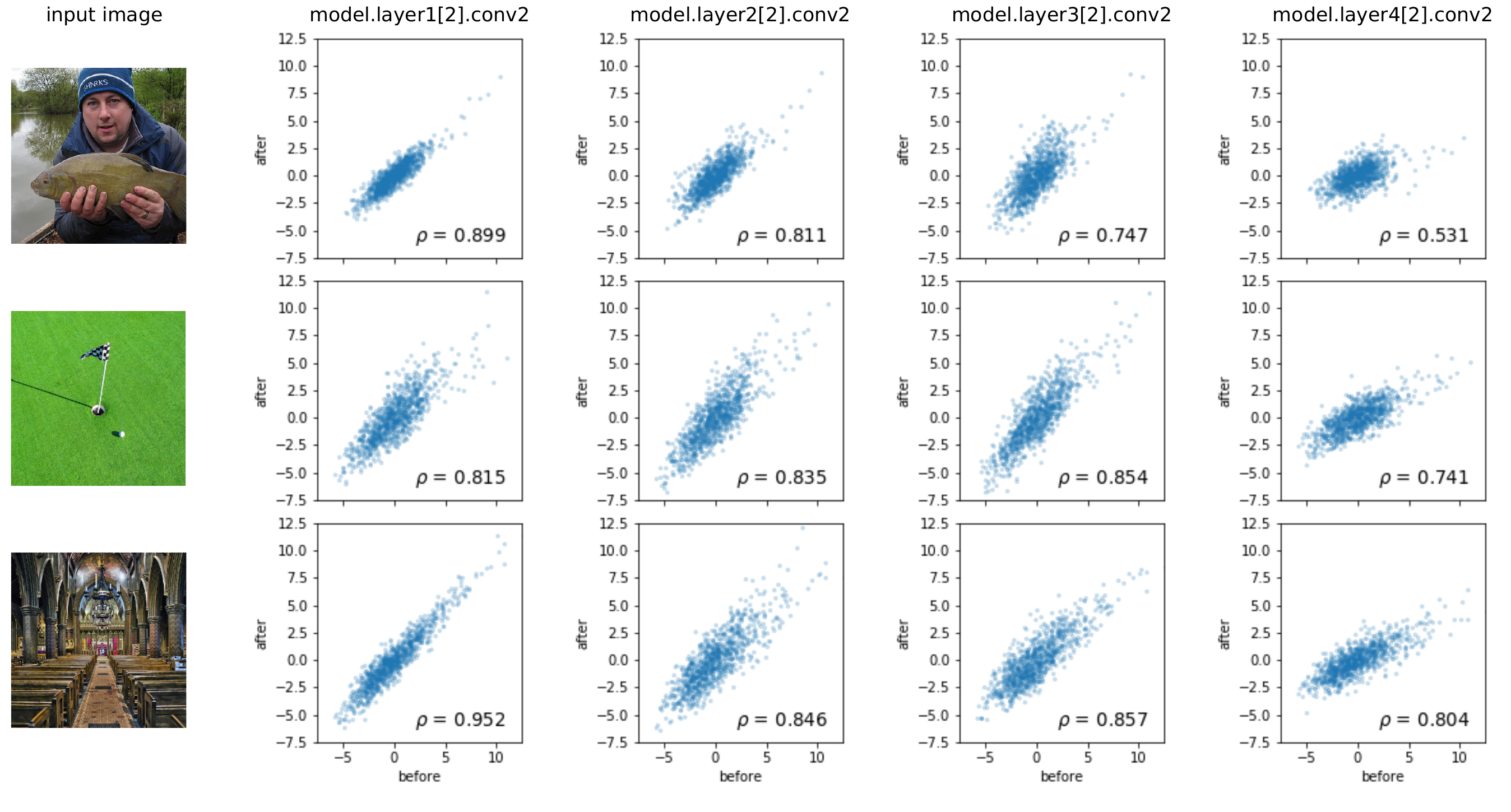}
    \caption{Effect of randomization on output logits on a ResNet-34 model for three images from ImageNet. Each point in the scatter plot is a logit for a particular class before and after randomization. Columns correspond to different layers being randomized.}
    \label{fig:impl_invariance}
\end{figure*}

We show that for certain neural network architectures, specifically architectures that contain skip connections, a faithful explanation must further retain an additive baseline component before and after randomizations. We first demonstrate the presence of such additive component on the popular ResNet~\cite{Resnet:he2016deep} model and then propose an explanation for its necessity. The ResNet is structured as a sequence of multiple modules where each module is structured as a sequence of parameterized layers, equipped with skip connections. The skip connections enable to better propagate the forward and backward signal as they simply replicate the activation and gradients from layer to layer.

Fig.\ \ref{fig:impl_invariance} shows for a ResNet-34 model and the same images as before how randomizing weights at some layer affects logit scores before and after randomization. Each point in the scatter plot is one of the 1000 class logits. We observe significant correlation between the logit before and after randomization. This suggests that the model remains unchanged to a large extent and a faithful explanation should reflect such lack of change by producing a similar explanation.

Corresponding explanations are shown in Figure \ref{fig:impl_invariance_heatmaps} for the logit associated to the true class: When randomizing ``\texttt{layer4.2.conv2}'' of ResNet-34, the explanation remains largely the same (cf.\ column 2 and 5). The LRP explanation technique enables to assess contribution of different components of the neural network, and in our case, we can identify the role of the skip connection and the weighted path (cf.\ columns 3, 4, 6, 7). Interestingly, the explanation component that passes through the skip connection remains practically unchanged after randomization, thereby faithfully reflecting the lack of change at the output of the network (cf.\ Figure \ref{fig:impl_invariance}). The (weaker) contribution of the weighted path is strongly affected by randomization but its addition does not affect the overall explanation significantly.

\medskip

We propose a formal argument that predicts the presence (and necessity) of an additive component for a broader range of faithful explanation methods, beyond LRP. Consider the simple architecture drawn in Fig.\ \ref{figure:diagram-resnet} (top) that mimics parts of a ResNet: a feature extractor, a skip connection layer, and a few top-layers.
\begin{figure}[t]
\centering
\includegraphics[width=0.6\linewidth]{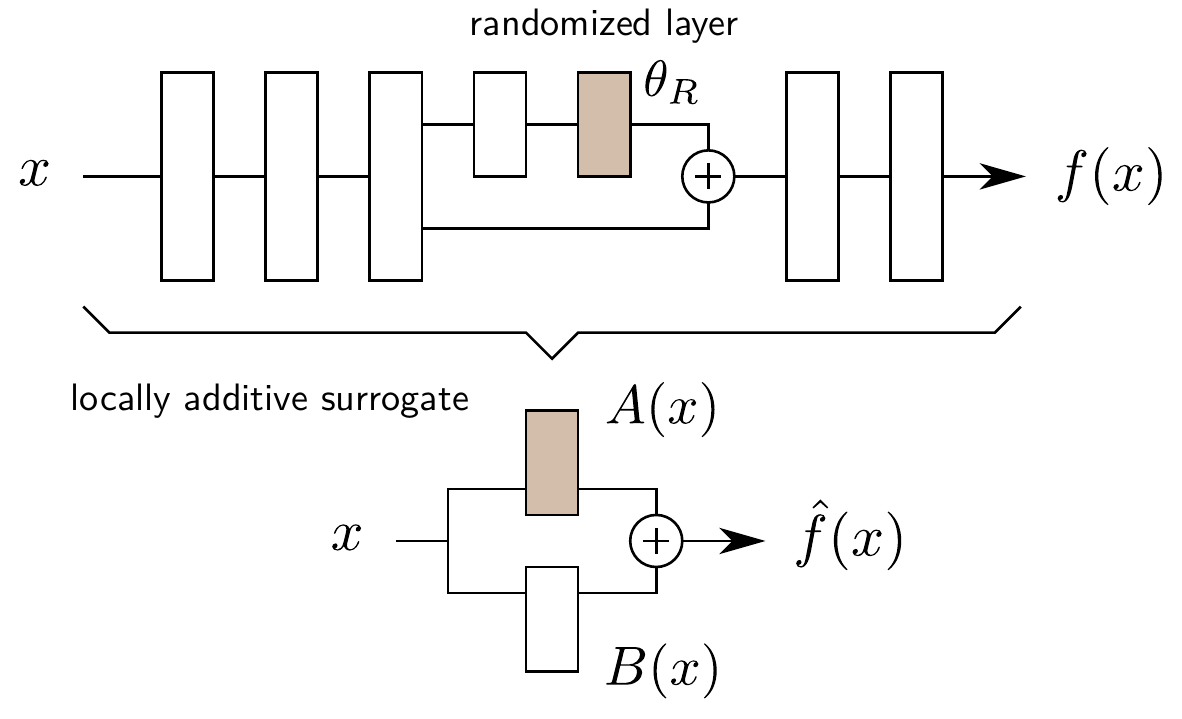}
\caption{Top: ResNet-like structure where only one branch contains (randomizable) parameters at a particular layer. Bottom: Additive surrogate of the original model.}
\label{figure:diagram-resnet}
\end{figure}
\begin{figure*}[t!]
    \centering
    \includegraphics[width=0.9\linewidth]{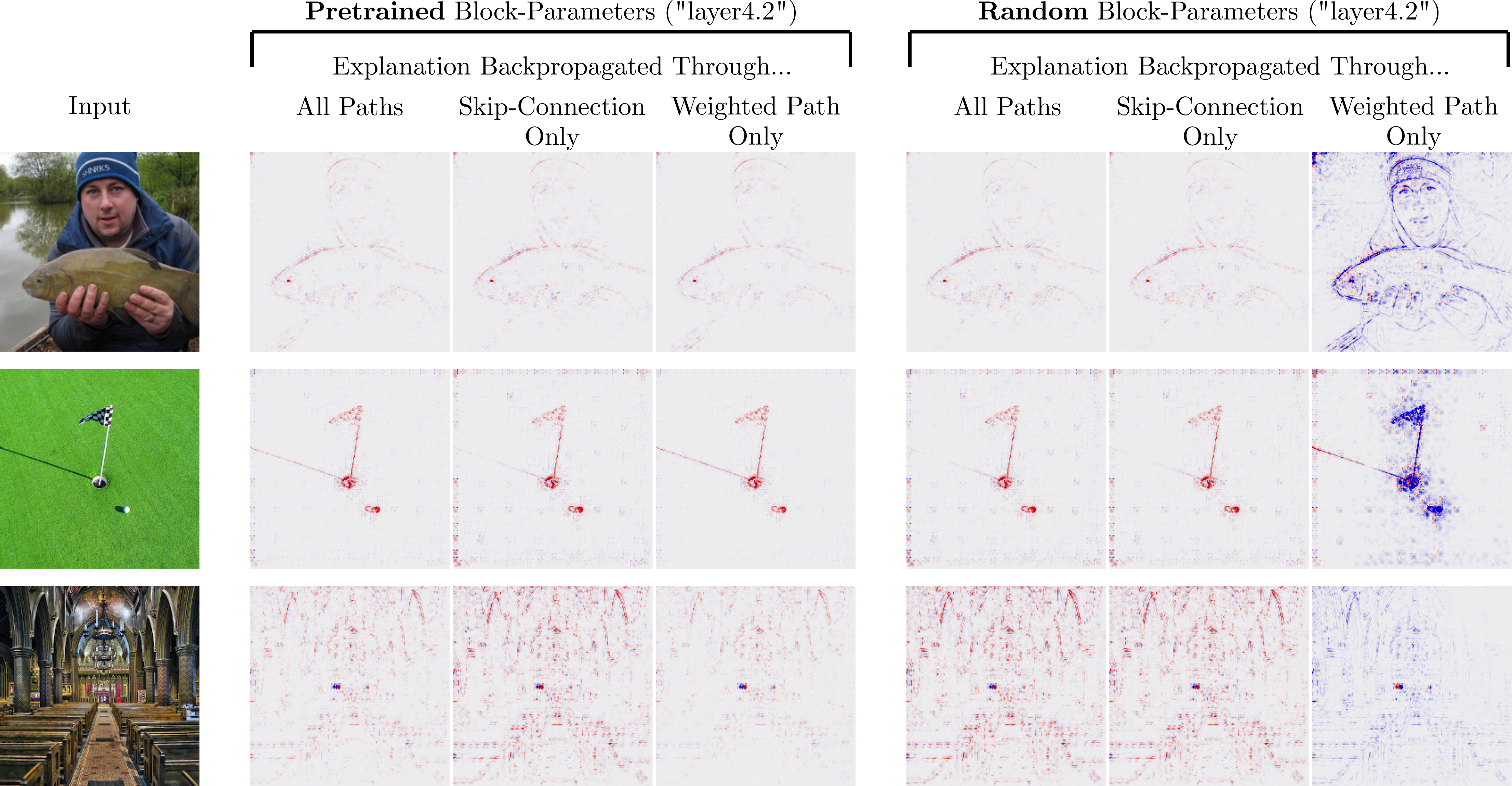}
    \caption{Effect of parameter randomization of "layer4.2.conv2" on ResNet-34 Explanations.
     To generate explanations, LRP-$\beta$ was applied in convolution layers, and LRP-$\varepsilon$ in dense layers. Despite random parameter re-initialization, only the part of the explanation that is propagated through the weighted path changes significantly. Due to the skip connections being unaffected, the total explanations barely change despite randomization.}
    \label{fig:impl_invariance_heatmaps}
\end{figure*}
Locally approximating the top layers as a linear model (and verifying that the approximation holds under a sufficient set of perturbations of the input $x$), then one can decompose this approximated model in two terms, one that depends on the randomized parameter $\theta_R$, and another term that is constant w.r.t.\ $\theta_R$.
\begin{align}
\hat{f}(x;\theta_R) = A(x;\theta_R) + B(x)
\end{align}
(cf.\ Fig.\ \ref{figure:diagram-resnet}, bottom).
In this model, randomization only affects the first component and thus preserves some of the original logit information. This is what we observe empirically in Fig.\ \ref{fig:impl_invariance} through high correlation scores of logit before and after randomization. If we further assume usage of an explanation technique satisfying the linearity property, then the explanation of the surrogate $\hat{f}$ decomposes as:
\begin{align}
\mathcal{E}\{ \hat{f}(x;\theta_R) \} = \mathcal{E}\{ A(x;\theta_R) \} + \mathcal{E}\{ B(x) \}
\end{align}
i.e.\ an explanation component that is affected by randomization and another explanation component that remains constant. This constancy under randomization prevents that optimal scores in terms of a single-layer randomization-based sanity check metric are achieved. Hence, our analysis predicts that attempts to score higher in the sanity check metric would require degrading the faithfulness of the explanation (e.g.\ by introduction of noise in the explanation, or by spuriously removing the additive component).

\subsection{Probabilistic Preservation of Highly Activated Features in the Unrandomized Feature Layers}
\label{ssec:preservationofhighlyactivatedfeatures}

In this section we show that with high probability over draws of random parameters $\theta_R$ in Equation \eqref{eq:twoparts}, regions of $\phi(x)$ with high activations will contribute highly to the output $f$, even when its value changes due to randomization in $g$. Unlike in the previous section, this holds for \emph{any} ReLU network.

This observation can be explained when considering how activations are propagated to the next layer in a randomized network. One can show that small activations have a rather low probability to obtain the same average contribution to the output of a neuron as large activations, when being weighted in a linear combination with zero-mean normal weights. 
This statement is formulated for a single neuron in Theorem \ref{thm:limitedeffectofsmallactivations}.

\begin{theorem}[Low probability for small activations to achieve the same average contribution to the output as large activations]\label{thm:limitedeffectofsmallactivations}

Suppose we have two sets of non-negative activations, $X_{L}$ and $X_{S}$ such that the activations of one set are by a factor of at least $K$ larger than of the other set:
\begin{align}
    \min_{x_l \in X_L} x_l \ge K \max_{x_s \in X_S} x_s
\end{align}

Then the probability under draws of zero-mean normal weights $w \sim N(0,\sigma^2)$ that the summed contribution of neurons in $X_S$ surpasses the summed contribution of neurons in $X_L$, that is 
\begin{align}0<\sum_{ x_l \in X_L} w_l x_l  \le \sum_{ x_s \in X_S} w_s x_s\ ,
\end{align}
is the tail-CDF $P(Z \ge K)$ of a Cauchy-distribution with parameter $\gamma = \sqrt{\frac{|X_S|}{|X_L|}}$ and input value of at least $K$.
\end{theorem}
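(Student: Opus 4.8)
The plan is to reduce the event in question to a statement about the ratio of two independent Gaussian sums, and then to recognize that ratio as (the absolute value of) a Cauchy-distributed variable. First I would observe that, conditionally on the fixed activations, $S_L := \sum_{x_l\in X_L} w_l x_l$ and $S_S := \sum_{x_s\in X_S} w_s x_s$ are independent zero-mean Gaussians, since the weights are drawn i.i.d.\ $N(0,\sigma^2)$ and the two sums involve disjoint sets of weights. Their variances are $\sigma^2 \lVert X_L\rVert_2^2$ and $\sigma^2 \lVert X_S\rVert_2^2$ respectively, where $\lVert X_L\rVert_2^2 = \sum_{x_l\in X_L} x_l^2$ and similarly for $X_S$. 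The event $0 < S_L \le S_S$ is contained in the event $0 < S_L$ and $S_S/S_L \ge 1$; by symmetry of the Gaussian, $P(S_L>0)=\tfrac12$, and on that half-line the ratio $S_S/S_L$ is distributed as the ratio of two independent centered Gaussians, which is Cauchy with scale parameter $\lVert X_S\rVert_2 / \lVert X_L\rVert_2$ (the $\sigma$'s cancel). Hence $P(0<S_L\le S_S) = \tfrac12\,P(\mathrm{Cauchy}(0,\lVert X_S\rVert_2/\lVert X_L\rVert_2) \ge 1)$, which one rewrites by scaling as $\tfrac12\,P\bigl(\mathrm{Cauchy}(0,1) \ge \lVert X_L\rVert_2 / \lVert X_S\rVert_2\bigr)$.

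The second step is to turn the data-dependent ratio $\lVert X_L\rVert_2 / \lVert X_S\rVert_2$ into the clean bound involving $K$ and the cardinalities. Using the separation hypothesis $\min_{x_l} x_l \ge K \max_{x_s} x_s =: Km$, I would bound $\lVert X_L\rVert_2^2 = \sum_{x_l} x_l^2 \ge |X_L| (Km)^2$ and $\lVert X_S\rVert_2^2 = \sum_{x_s} x_s^2 \le |X_S| m^2$, so that $\lVert X_L\rVert_2 / \lVert X_S\rVert_2 \ge K\sqrt{|X_L|/|X_S|} = K/\gamma$ with $\gamma = \sqrt{|X_S|/|X_L|}$. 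Since the Cauchy tail $P(\mathrm{Cauchy}(0,1)\ge t)$ is decreasing in $t$, monotonicity gives $P(0<S_L\le S_S) \le \tfrac12\,P(\mathrm{Cauchy}(0,1)\ge K/\gamma) = \tfrac12\,P(\mathrm{Cauchy}(0,\gamma)\ge K)$, matching the claimed tail-CDF up to the factor $\tfrac12$ coming from the sign constraint $S_L>0$ (which the statement's phrasing absorbs into the event description).

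The key facts I would invoke are standard: that $\sum_i w_i a_i \sim N(0,\sigma^2\sum_i a_i^2)$ for fixed $a_i$, the independence of sums over disjoint index sets, and the lemma that the quotient $U/V$ of independent $U\sim N(0,s_1^2)$, $V\sim N(0,s_2^2)$ follows a Cauchy law with scale $s_1/s_2$ (the classical Cauchy--Gaussian ratio identity). The main obstacle — really the only delicate point — is bookkeeping around the one-sided constraint $0<S_L$: one must be careful that the event is $\{S_L>0\}\cap\{S_S/S_L\ge 1\}$ rather than a two-sided ratio event, so the probability picks up the extra $\tfrac12$, and one should check that the paper's statement is consistent with this (it phrases the conclusion as ``the tail-CDF $P(Z\ge K)$ of a Cauchy distribution with parameter $\gamma$'', which I read as identifying the relevant distribution rather than asserting equality without the sign factor; I would state the sharp inequality $P(0<S_L\le S_S)\le \tfrac12 P(Z\ge K)$ and note the interpretation). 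Everything else is a monotone substitution using the separation hypothesis.
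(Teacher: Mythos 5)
Your proposal is correct and follows essentially the same route as the paper's proof: identify the two sums as independent zero-mean Gaussians, recognize their ratio as Cauchy with scale $\lVert X_S\rVert_2/\lVert X_L\rVert_2$, bound that scale by $\gamma/K$ via the separation hypothesis, and rescale to express the result as the tail at $K$ of a Cauchy with parameter $\gamma=\sqrt{|X_S|/|X_L|}$. Your explicit tracking of the factor $\tfrac12$ arising from the one-sided constraint $S_L>0$ is a point of added care rather than a different method; the paper's argument implicitly works conditionally on $\sum_l w_l x_l>0$ and so omits it, but this does not change the qualitative conclusion or the approach.
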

The proof of Theorem \ref{thm:limitedeffectofsmallactivations} is in Supplement Section \ref{ssec:app:proofofthm2}. For probability estimates based on activation statistics of trained networks see Section \ref{ssec:probforwardpassovertaking} of the supplement.

To note, Theorem \ref{thm:limitedeffectofsmallactivations} is independent of any explanation method used.  It is a statement about the preservation of relative scales of forward pass activations. It says that even though the function output value itself changes substantially under randomization, channels with large activation values still contribute highly to the output.

This effect has an impact on explanation value scales: In ReLU networks, with neurons being modeled as $ y = \max(0, \sum_i w_i x_i +b )$, the differences in contributions of two inputs $w_i x_i$ to a neuron output $y$ in many cases translate to differences in explanation scores $R(x_i)$ which the inputs $x_i$ will receive.

Many explanation methods $R(\cdot)$ satisfy for non-decreasing activations the monotonicity property that if we consider two inputs $x_i$, $x_j$ which have no other connections except to neuron $y$, and the network assigns positive relevance $R(y)>0$ to $y$, then 
\begin{align}
w_ix_i \ge w_j x_j>0  \text{ implies } |R(x_i)| \ge |R(x_j)| \ .\label{eq:posmonotonicityspecial}  
\end{align}
This holds for \glsdesc{gi}, Shapley values, and $\beta$-\gls{lrp}. See Supplement Section \ref{ssec:monotonicityofrelevance} for a proof.

Using the monotonicity property to go from activations to explanations, we can conclude that the probability is low to achieve equally large absolute explanation values $\sum_{x_i \in X_S}|R(w_ix_i)|$ for inputs from the small valued set $X_S$ in Theorem \ref{thm:limitedeffectofsmallactivations}, when compared to $\sum_{x_i \in X_L}|R(w_ix_i)|$ from the set of large values $X_L$.\footnote{If we intend to achieve equal averaged (instead of summed) absolute explanation values $\frac{1}{|X_{S/L}|}\sum_{x_i \in X_{S/L}}|R(w_ix_i)|$, corresponding to two regions $X_S$ and $X_L$ with equal explanation scores, then a version of Theorem \ref{thm:limitedeffectofsmallactivations} holds in which $\gamma_*= \sqrt{\frac{|X_L|}{|X_S|}}$ is inverted. See Supplement Section \ref{ssec:app:proofofthm2} for a proof.}

Therefore, with high probability, explanations are also dominated by regions which have channels in the last unrandomized feature map with large activation values. 

Theorem \ref{thm:limitedeffectofsmallactivations} and the subsequent backward pass argument hold for a single neuron.  We can see that what the theoretical result predicts for a single neuron is consistent with what we observe empirically for the whole network exemplarily in Figure \ref{fig:impl_invariance-gen} and generally in explanations computed with \gls{gb} and \gls{lrp}. The above provides a theoretical justification for the exemplary observations in Figure \ref{fig:impl_invariance-gen}, where one can see salient structures from the input image in the explanation heatmaps after top-down network randomization.

In brief, explanation heatmaps will be dominated with high probability by regions with high activations irrespective of randomization on top, thus showing limited variability under randomization. This has implications regarding the usage of top-down randomization tests to compare attribution methods: 
a higher variability does not imply a better explanation, when it is beyond what can be expected from the dominating high activations in the forward pass.

A further property which is preserved is shown in the Supplement Section \ref{ssec:posdominance}.
The randomization-based sanity check fails to account for these necessary invariances of the model and explanation under randomization. This misalignment 
is particularly strong if testing the effect of randomization on the \textit{absolute} attribution scores instead of the signed attribution scores. The necessity to use signed scores rather than absolute ones, as well as the limited change to the explanation one can expect under randomization of parameters was also emphasized in  \cite{DBLP:journals/corr/abs-1806-04205}.

Given the discrepancy between model faithfulness measures and top-down model-based randomization checks, we remark that model faithfulness testing changes an input sample towards a partially implausible sample. Therefore it is not a perfect criterion. Another drawback is the non-uniqueness of local modifications. Different choices of local modifications will yield different measurements. However, model faithfulness testing assesses a property of explanations for a given trained model.

Model randomization changes a trained model into a predominantly implausible model given the training data. Therefore it is not clear what practical aspect of a \emph{given realistically trained} model top-down model randomization intends to measure. It seems to be unrelated to any use-case in the deployment of a well-trained model. This makes it challenging to suggest an improvement for top-down model-based randomization checks.

In contrast, bottom-up randomization could exhibit different (and ecologically valid) properties, because it removes strongly activated features from a model, which were the starting point in Theorem \ref{thm:limitedeffectofsmallactivations}.

\section{Conclusion and Outlook}

In this paper, we have argued against the practice of using top-down randomization-based sanity checks \cite{DBLP:conf/nips/AdebayoGMGHK18} for the purpose of ranking XAI methods. Our study is motivated by a substantial empirical discrepancy between the similarity scores produced by the randomization approach, and occlusion-based methods for evaluating faithfulness, in particular region perturbation \cite{DBLP:journals/tnn/SamekBMLM17}.

Note that our theoretical and empirical results do not contradict the overall claim of \cite{DBLP:conf/nips/AdebayoGMGHK18} that a perturbation of the parameters of the model should induce a perturbation of the model and its prediction behavior, which in turn should also perturb the explanation. The issue is instead that the similarity score should only be used as a binary test to support the presence or absence of an effect of randomization on the explanation, but not to discriminate between two methods that pass the randomization test.

We have presented two main factors that explain the discrepancy between randomization-based similarity scores and the outcome of input perturbation tests: Firstly, the similarity scores used to measure the effect of randomization can be decreased artificially (and significantly) by introducing noise in the explanation. Such noise can be inherited from the gradient, which is typically highly varying and largely decoupled from the actual prediction for deep architectures.

Secondly, model randomization only alters the prediction behavior to a certain extent, often due to fixed elements in the model such as skip connections or invariances inherited from the lower layers. Hence, a maximally dissimilar explanation after randomization may not account for the partly unchanged prediction behavior of the randomized model.

These factors suggest directions for achieving a better correlation between similarity scores after randomization and evaluations of explanation faithfulness. These include (1) to only measure change w.r.t.~input features to which the model is not invariant to (such features can be identified by attributing intermediate-layer activations to the input layer and retaining only input features with non-zero attribution scores), and (2) to identify the non-baseline component of the function, and only assess whether the explanation of that non-baseline component has been randomized (e.g.~to exclude from the explanation what passes through the skip connections).

Nevertheless, these possible refinements are challenging to characterize formally, or must address the specificity of individual architectures, thereby losing the universality of the original randomization test.

\section*{Acknowledgements}

AB was supported by the SFI Visual Intelligence, project no.~309439 of the Research Council of Norway. KRM was partly supported by the Institute of Information \& Communications Technology Planning \& Evaluation (IITP) grants funded by the Korea government(MSIT) (No. 2019-0-00079, Artificial Intelligence Graduate School Program, Korea University and No. 2022-0-00984, Development of Artificial Intelligence Technology for Personalized Plug-and-Play Explanation and Verification of Explanation). This work was supported in part by the German Ministry for Education and Research (BMBF) under Grants 01IS14013A-E, 01GQ1115, 01GQ0850, 01IS18025A and 01IS18037A. LW, SL and WS were partly supported by the German Ministry for Education and Research (BMBF) under Grants [BIFOLD (01IS18025A, 01IS18037I) and Patho234 (031L0207C)], the European Union's Horizon 2020 research and innovation programme as grant [iToBoS (965221)],  and the state of Berlin within the innovation support program ProFIT as grant [BerDiBa (10174498)].

\bibliography{references}
\bibliographystyle{ieee_fullname}

\newpage
\appendix
\onecolumn
\renewcommand\thefigure{App.\arabic{figure}}
\setcounter{figure}{0}
\begin{center}
    
\noindent\Large{\textbf{Appendix}}
\end{center}

\section{Details on Experiments}
\label{ssec:experiment_details}

\subsection{Randomization-based Sanity Checks vs. Faithfulness}
\label{ssec:experiment_details:faithfulness_comp}
The ResNet-50 and DenseNet-121 are used as provided by the Torchvision package of PyTorch \cite{NEURIPS2019_9015}. For the EfficientNet-B0 we resort to a pretrained model provided by the github of Luke Melas \cite{lukemelaseffnet}. All results are averaged over the first 1000 images from the ImageNet validation set. 

For model-randomization-based sanity checks testing we reset the layers as per the initialization introduced by \cite{he2015delving}. 
We report model randomization for a partial set of layers, as these results are in already known from \cite{DBLP:conf/nips/AdebayoGMGHK18}, which makes an exhaustive computation for each layer unnecessary. For the ResNet-50 we randomize the fully connected layer, and in each step we randomize all layers from the last randomized layer until the next layer with name {\tt .conv1} as per Torchvision until we have randomized 16 {\tt .conv1}-named layers. For the DenseNet-121 we also randomize the fully connected layer, and in each step we randomize all layers from the last randomized layer until the next third layer with name {\tt .conv1} until we have randomized 63 {\tt .conv1}-named layers. For the EfficientNet-B0 we randomize the fully connected layer, and in each step we randomize all layers from the last randomized layer until the next layer with name {\texttt{.\_depthwise\_conv}} until we have randomized 17 {\tt .\_depthwise\_conv}-named layers. 

For the perturbation-based testing we create a blurred version of the original image, using a constant blur kernel of kernel size 15. We perform the perturbation by replacing a region of kernel size 8 or 15 in the original image by a patch from the blurred version. We do this for the 30 regions in an image which have the highest average attribution map score. Unlike the random draw for a patch used \cite{DBLP:journals/tnn/SamekBMLM17}, using a blurred copy results in a less pronounced outlier structure due to preservation of color statistics while removing texture. We measure the decrease of the prediction function under the iterative replacement of the highest scoring patches of the image by the corresponding patches from the blurred copy.

\subsection{Forward Pass-Adaptive $\beta$-rule}

It is used in the experiments for model faithfulness estimation. The idea is based on the interpretation that $\frac{\beta}{1+\beta}$ in LRP-$\beta$ is the fraction of redistributed negative to positive relevance. An adaptive way to determine its value can be derived by setting it equal to the corresponding fraction $\frac{ -\sum_i  (w_ix_i)_- }{ \sum_i(w_ix_i)_+} $ of the input statistics of a neuron, and solving it for $\beta$ as in:
\begin{align}
\frac{\beta}{1+\beta} &= \frac{ -\sum_i  (w_ix_i)_- }{ \sum_i(w_ix_i)_+} \\
\Rightarrow \beta &= \frac{   -\sum_i (w_ix_i)_- }{ \sum_i(w_ix_i)_+ - \sum_i (w_ix_i)_- }
\end{align}
We use a value of $\beta_*=\min(\beta,3.0)$ in all experiments.

\subsection{Additional Results of Model Faithfulness Experiments}
\label{ssec:morefaithfulessexperiments}
Please see Figure \ref{fig:perturb_gbpvsgrad_k8} for results with a kernel size of $8$.
\begin{figure*}[!bht]
\centering     
\subcaptionbox{ResNet50}{\label{fig:perturb_gbpvsgrad_k8:a}
\includegraphics[width=0.49\linewidth]{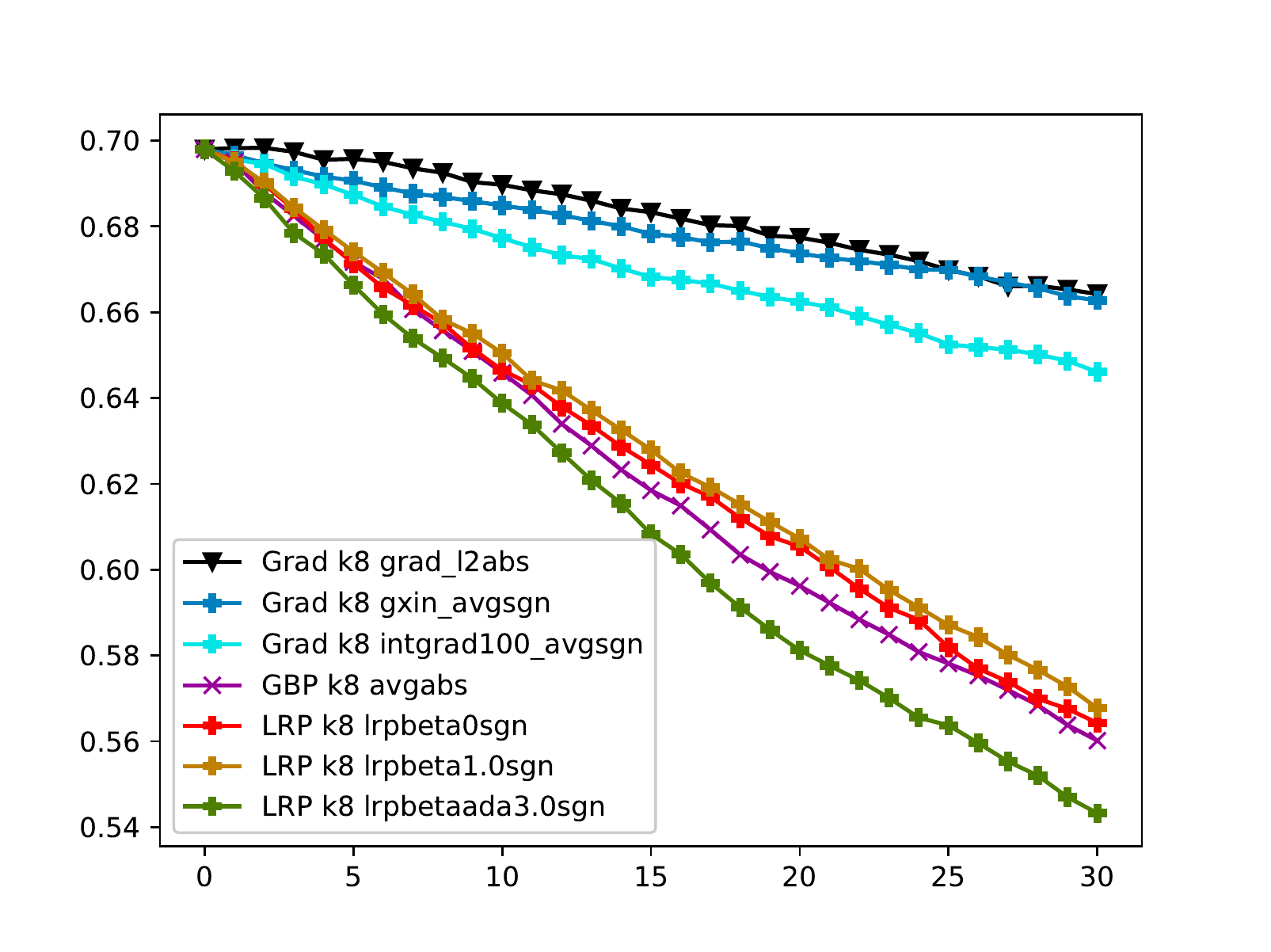}
}
\subcaptionbox{DenseNet121}{\label{fig:perturb_gbpvsgrad_k8:b}
\includegraphics[width=0.49\linewidth]{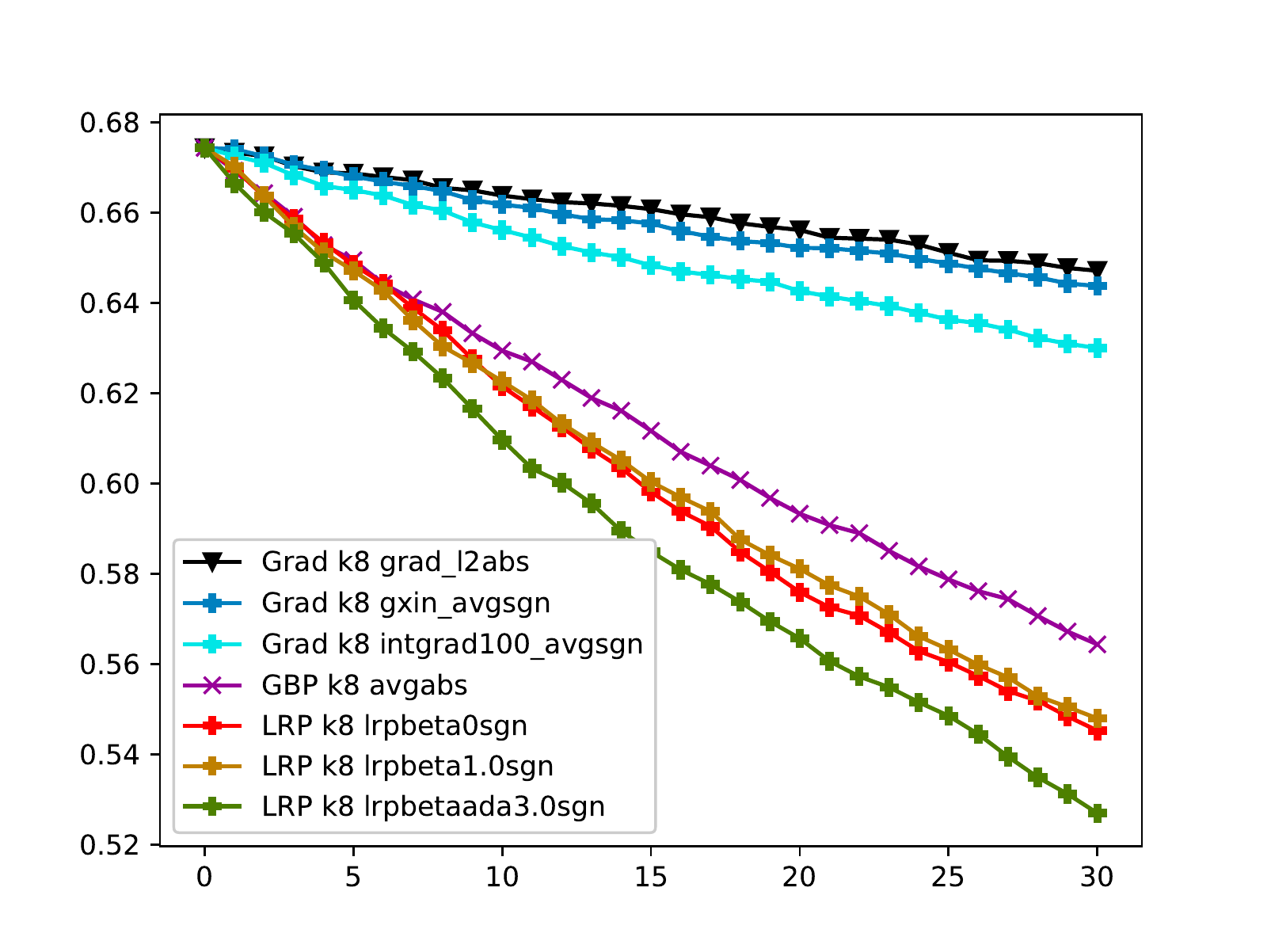}
}
\subcaptionbox{EfficientNet-B0}{\label{fig:perturb_gbpvsgrad_k8:c}
\includegraphics[width=0.49\linewidth]{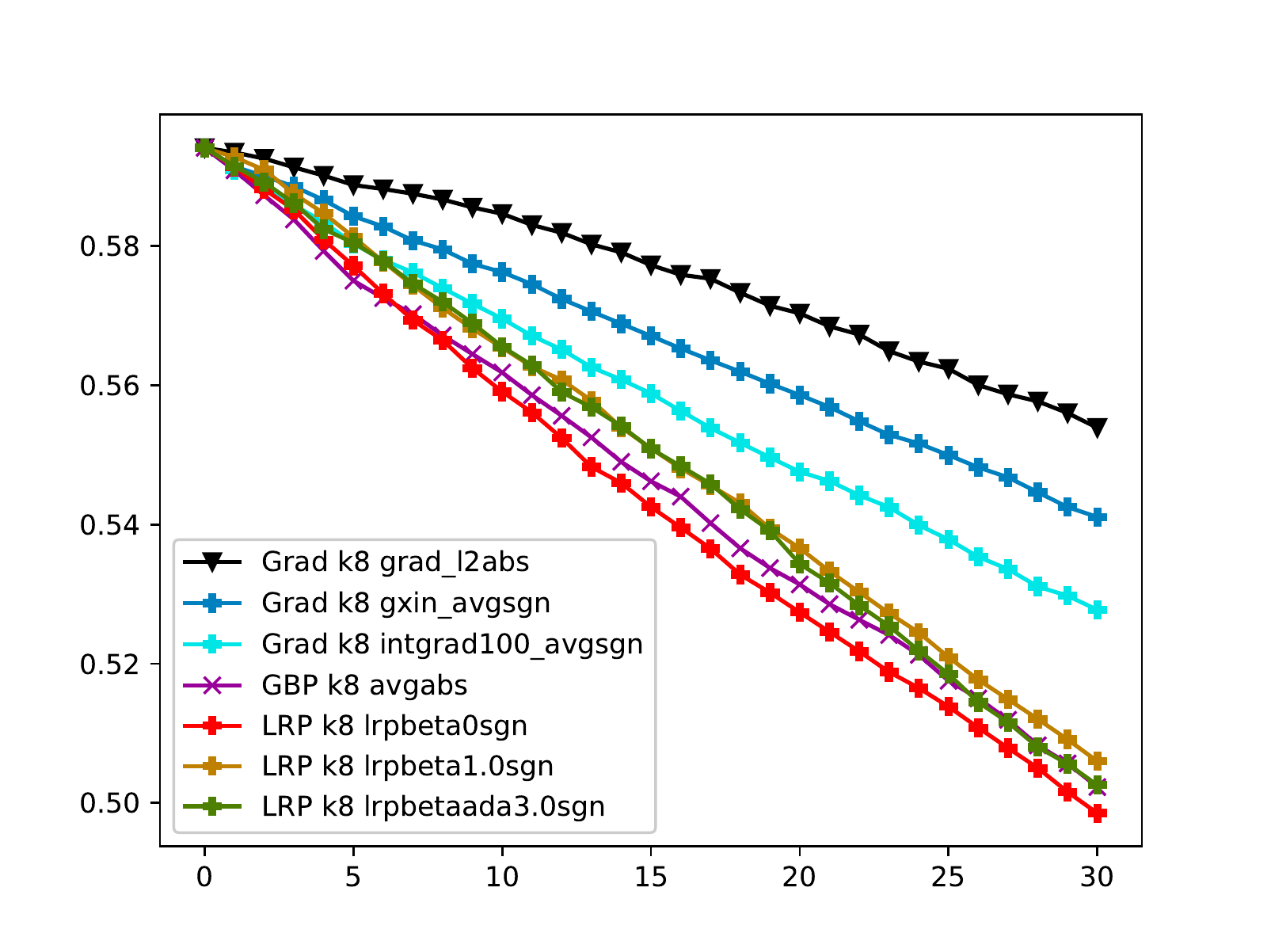}
}
\caption{\label{fig:perturb_gbpvsgrad_k8} Results of model faithfulness testing by measuring the correlation to iterative occlusion with a kernel size of 8. The comparison shows the gradient, gradient $\times$ input, integrated gradient, guided backpropagation and several LRP approaches. The occlusion is performed by taking patches from a blurred copy of the original image. The figure shows the softmax scores. \textit{Lower is better.}}
\end{figure*}

\section{Proof of Theorem 1}
\label{ssec:app:proofofthm1}
\begin{proof}
Consider the term in Equation (1) of the main paper. 
Since we consider processes with $\sigma_{AB} \ge 0$, this attains the minimum at $\sigma_{AB}=0$, resulting in
\begin{align}
 &\min_{  \sigma_{AB} \ge 0 } \left|\frac{2\mu_A \mu_B +C_1}{ \mu^2_A + \mu^2_B +C_1  } \frac{2\sigma_{AB} +C_2}{ \sigma^2_A + \sigma^2_B +C_2} \right|\\
 & =  \frac{|2\mu_A \mu_B +C_1|}{ \mu^2_A + \mu^2_B +C_1  } \cdot \min_{  \sigma_{AB} \ge 0 }  \frac{|2\sigma_{AB}+ C_2|}{ \sigma^2_A + \sigma^2_B +C_2} \\
 & =  \frac{|2\mu_A \mu_B +C_1|}{ \mu^2_A + \mu^2_B +C_1  }\frac{C_2}{ \sigma^2_A + \sigma^2_B +C_2}\\
  & \le  \frac{2|\mu_A \mu_B| +C_1}{ \mu^2_A + \mu^2_B +C_1  }\frac{C_2}{ \sigma^2_A + \sigma^2_B +C_2}\\
 &\le \frac{\mu^2_A + \mu^2_B +C_1}{ \mu^2_A + \mu^2_B +C_1  }\frac{C_2}{ \sigma^2_A + \sigma^2_B +C_2}
 \end{align}
The last inequality holds due to $ \pm 2ab \le a^2  +b^2   $.

\end{proof}

\section{The Sensitivity of Spearman Rank Correlation Minimization Towards Noise}
\label{ssec:spearmansens}
In Section 3 of the main paper we demonstrated the sensitivity of the SSIM metric towards random attributions. The same (in terms of ranks) holds for the other distance metric employed by \cite{DBLP:conf/nips/AdebayoGMGHK18}, the Spearman Rank Correlation, given as 

\begin{equation}
\label{eq:srankeq}
    \frac{\sigma_{R(A)R(B)}}{\sigma_{R(A)}\sigma_{R(B)}}~,
\end{equation}

with $R(A)$ and $R(B)$ being the ranks derived from attribution maps $A$ and $B$. 
The following Theorem and Proof show the sensitivity of this metric's minimization towards random noise analogously to Theorem 1 and the corresponding Proof:

\begin{theorem}
\label{Theor.3}
Consider the set of all statistical processes with non-negative expected covariance between the corresponding ranks $\sigma_{R(A)R(B)} \ge 0$. 

Then the expected Spearman Rank Correlation is minimized by a statistical process with zero covariance between the corresponding ranks. 

\end{theorem}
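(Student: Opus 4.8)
The plan is to follow the same template as the proof of Theorem~\ref{Theor.1}, but to exploit a structural feature peculiar to rank statistics: the marginal distribution of the ranks is identical for \emph{every} attribution map, so the denominator of the Spearman coefficient in Equation~\eqref{eq:srankeq} is a fixed constant and the coefficient reduces to a linear, strictly increasing function of the rank covariance.

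First I would fix notation. For an attribution map $A$ with $n$ entries (the pixels of a patch, or of the whole map), let $R(A)$ be the vector of ranks, and assume distinct values --- which holds almost surely for any continuous attribution process; the degenerate tie case is discussed below. Then $R(A)$ is a permutation of $\{1,\dots,n\}$, so its empirical mean and variance depend only on $n$: in particular $\sigma^2_{R(A)} = c_n$ with $c_n = (n^2-1)/12$ (the variance of the uniform law on $\{1,\dots,n\}$), and likewise $\sigma^2_{R(B)} = c_n$. Hence the denominator $\sigma_{R(A)}\sigma_{R(B)} = c_n > 0$ is deterministic, and the per-sample Spearman rank correlation equals $\sigma_{R(A)R(B)}/c_n$.

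Second, I would pass to the expectation over the statistical process. Since $c_n$ is a constant, the expected Spearman rank correlation equals $\mathbb{E}[\sigma_{R(A)R(B)}]/c_n$, i.e.\ in the notation of the theorem it is $\sigma_{R(A)R(B)}/c_n$ --- a linear function of the expected rank covariance with strictly positive slope $1/c_n$. On the admissible set $\{\sigma_{R(A)R(B)} \ge 0\}$ this is therefore minimized exactly at $\sigma_{R(A)R(B)} = 0$, and that value is attained, e.g.\ by any process in which $A$ and $B$ are statistically independent, so that their ranks are independent and the expected rank covariance vanishes. This is precisely the assertion of Theorem~\ref{Theor.3}. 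Note that, unlike the SSIM case, there is no additive smoothing constant in the numerator, so here the minimum value is simply $0$ rather than a bound of the form $C_2/(\sigma^2_A+\sigma^2_B+C_2)$, and the analogue of the final $\pm 2ab \le a^2+b^2$ step is not needed.

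The only genuine obstacle is ties. With mid-ranking, repeated values alter the multiset of ranks and hence $\sigma_{R(A)}$, and a fully constant attribution map gives $\sigma_{R(A)} = 0$, which leaves the coefficient undefined. I would handle this by restricting to continuous attribution processes (ties have probability zero), or, if ties must be allowed, by conditioning on the tie pattern: given the pattern, the denominator is again a positive constant, so the linear-in-covariance argument runs conditionally and then averages over patterns. Everything else is immediate once the constancy of the denominator is established.
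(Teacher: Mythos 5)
Your proposal is correct and follows essentially the same route as the paper's proof: the Spearman coefficient in Equation~\eqref{eq:srankeq} is a ratio with non-negative numerator $\sigma_{R(A)R(B)}$ and positive denominator, so over the admissible set it is minimized at zero rank covariance with value $0$. Your additional observation that the denominator $\sigma_{R(A)}\sigma_{R(B)}$ is a \emph{deterministic} constant $c_n=(n^2-1)/12$ (ranks being a permutation of $\{1,\dots,n\}$, absent ties) is a worthwhile refinement the paper omits, since it is what legitimizes exchanging the expectation with the ratio when speaking of the \emph{expected} Spearman correlation.
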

\begin{proof}
Consider the term in Equation \eqref{eq:srankeq}. Since $\sigma_{R(A)R(B)} \ge 0$, and the standard deviations $\sigma_{R(A)}, \sigma_{R(B)} \ge 0$, this attains the minimum at $\sigma_{R(A)R(B)}=0$, resulting in
\begin{align}
 &\min_{  \sigma_{R(A)R(B)} \ge 0 } \left( \frac{\sigma_{R(A)R(B)}}{\sigma_{R(A)}\sigma_{R(B)}} \right)\\
 & =  \frac{0}{\sigma_{R(A)}\sigma_{R(B)}}= 0
 \end{align}
\end{proof}

Of course, the ranks and their covariance depend not only on the attribution maps, but also on the employed ranking function. However, if simply the sorted indices of attribution maps (or their absolute values) are used as ranks, then $\sigma_{R(A)R(B)} \ge 0$ iff $\sigma_{AB} \ge 0$, and Theorem \ref{Theor.3} holds for all statistical processes with non-negative expected covariance $\sigma_{AB} \ge 0$.

\section{The Sensitivity of Normalized MSE Maximization Towards Noise}
\label{sec:msesens}
One may consider to replace the \gls{ssim} by a \gls{mse}. This comes with another topic to be considered: Different methods to compute attribution maps may exhibit different patch-wise variances, which will affect the scale of differences used in model-randomization-type sanity checks unrelated to the effects coming from the model randomization itself. This raises the question of how to normalize attribution maps in order to ensure a comparability of the distances computed using different attribution methods. 

We consider for the case of \gls{mse} attribution maps which are normalized by dividing the attribution map by the square-root of its average second moment estimate:
\begin{equation}
\label{eq:msenorm}
    \frac{A}{ \left(\hat{E}[ A^2 ]\right)^{1/2} } =  \frac{A_{h,w}}{  \left(\frac{1}{HW}\sum_{h',w'}h_{h',w'}^2\right)^{1/2} }~,
\end{equation} 
where $A_{h,w}$ is the value of the attribution map at pixel location $(h, w)$ and $H, W$ denote the attribution map height and width, respectively.  
This ensures that the average squared distance of an attribution score per pixel from the attribution value of $0$ is 1. One may ask why we did not choose the more common standard deviation \footnote{$\left(\hat{E}[ A_{h,w}^2 ]-(\hat{E}[ A_{h,w} ])^2\right)^{1/2}$ for reference}. Standard deviation normalizes the average squared distance of a pixel-score from the mean of a patch to be one. However, the mean score over the pixels of the different attribution methods (such as gradient with $\ell_2$-norm over the RGB-subpixels, also known as Sensitivity~\cite{DBLP:journals/jmlr/BaehrensSHKHM10,simonyan2013deep}, gradient with averaging over the RGB-subpixels, gradient $\times$ input and integrated gradients) has no particular meaning for explaining the prediction in the context of the above methods. The value of zero ($0$) has for all of above methods the meaning of being the estimate of non-contribution to the prediction, which justifies the choice of second moment estimates. Thus we ensure an equal average distance from the point of no-contribution by this type of normalization.
This ensures better comparability of distances among attribution maps computed for different attribution map processes.

Using the \gls{mse} does not resolve the issue that a zero covariance attribution map yields the best results among all statistical processes with non-negative covariances $\sigma_{AB}\ge 0$. In the following $A$, $B$ can be single subpixels. It directly translates to patches when using $E[\|A-B\|^2_2]$ instead.

The following theorem is again meant to be used with two different attribution maps $A$, $B$, e.g.,~coming from a model and a partially randomized variant of it, over the same patch location.
\begin{theorem}
\label{Theor.2}
Consider the set of all statistical processes 
with non-negative expected covariance for each patch $\sigma_{AB} \ge 0$. Then the expected \gls{mse} can be maximized by using a statistical process with zero covariance and the maximal value is 
 $2-2\frac{\mu_A\mu_B}{E[A^2]^{1/2}E[B^2]^{1/2}}$
\end{theorem}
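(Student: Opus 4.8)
The plan is to mirror the computation used for Theorem~1: expand the normalized \gls{mse} into three terms, two of which are pinned to the value $1$ by the normalization of Equation~\eqref{eq:msenorm}, and isolate the single remaining term, which carries all the dependence on $\sigma_{AB}$. Writing the normalized subpixel variables as $\tilde A = A / E[A^2]^{1/2}$ and $\tilde B = B / E[B^2]^{1/2}$, I would compute
\begin{align}
E\!\left[(\tilde A - \tilde B)^2\right]
&= E[\tilde A^2] - 2\,E[\tilde A \tilde B] + E[\tilde B^2]\\
&= 2 - \frac{2\,E[AB]}{E[A^2]^{1/2}\,E[B^2]^{1/2}},
\end{align}
where $E[\tilde A^2] = E[\tilde B^2] = 1$ holds by construction of the second-moment normalization, and the normalization constants, being deterministic, factor out of $E[\tilde A\tilde B]$.

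Next I would substitute the elementary identity $E[AB] = \sigma_{AB} + \mu_A\mu_B$, which turns the right-hand side into $2 - 2(\sigma_{AB} + \mu_A\mu_B)\big/\big(E[A^2]^{1/2}E[B^2]^{1/2}\big)$. Holding the marginal statistics $\mu_A,\mu_B,E[A^2],E[B^2]$ fixed --- these are precisely the quantities the normalization depends on, and they are unaffected by the choice of joint coupling of $(A,B)$ --- this expression is affine and strictly decreasing in $\sigma_{AB}$ (since $E[A^2],E[B^2]>0$). Hence over the admissible range $\sigma_{AB}\ge 0$ it attains its maximum at the boundary point $\sigma_{AB}=0$; such a coupling is realizable, e.g.\ by taking $A$ and $B$ statistically independent, so the maximum is genuinely attained and equals $2 - 2\mu_A\mu_B\big/\big(E[A^2]^{1/2}E[B^2]^{1/2}\big)$, as claimed. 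The patch version follows verbatim after replacing the scalar square by $E[\|A-B\|_2^2]$ and summing the identity over the subpixels of the patch.

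As with Theorem~1, the only step that needs care is spelling out the optimization domain: the statement ranges over joint processes for the pair $(A,B)$ with prescribed (non-degenerate) marginals and only the cross-covariance $\sigma_{AB}$ free, constrained to be non-negative. Once this is made explicit, the argument reduces to a one-line monotonicity observation and there is no real obstacle. I would additionally note the contrast with Theorem~1 that here no absolute value is taken, so depending on the sign of $\mu_A\mu_B$ the maximal \gls{mse} may lie below, at, or above $2$; the content of the theorem is thus about the location of the optimum (zero covariance, i.e.\ a statistically uncorrelated and hence uninformative attribution process), not about the size of the optimal value.
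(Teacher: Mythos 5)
Your proof is correct and follows essentially the same route as the paper's: expand the second-moment-normalized squared difference into $2 - 2E[AB]/\bigl(E[A^2]^{1/2}E[B^2]^{1/2}\bigr)$, substitute $E[AB]=\sigma_{AB}+\mu_A\mu_B$, and maximize over $\sigma_{AB}\ge 0$ at the boundary $\sigma_{AB}=0$. Your added remarks on attainability (independent coupling), the fixed-marginals optimization domain, and the sign of $\mu_A\mu_B$ are sensible clarifications but do not change the argument.
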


\begin{proof}
\begin{align}
&E\left[\left( \frac{A}{ E[A^2]^{1/2}  } - \frac{B}{ E[B^2]^{1/2}  }\right)^2\right]\\
&= \frac{ E[A^2]}{E[A^2]} -2 \frac{E[AB]}{E[A^2]^{1/2}E[B^2]^{1/2}} + \frac{ E[B^2]}{E[B^2]}  \\
& =  2 - 2 \frac{\sigma_{AB}}{E[A^2]^{1/2}E[B^2]^{1/2}}-2\frac{\mu_A\mu_B}{E[A^2]^{1/2}E[B^2]^{1/2}} \\
& \le 2 -2\frac{\mu_A\mu_B}{E[A^2]^{1/2}E[B^2]^{1/2}} \label{eq:normalizedmse}
\end{align}
\end{proof}

With respect to the influence of means, for $\mu_A\mu_B \approx 0$, this would result in a MSE of $2$. Note that we can observe from Figure \ref{fig:mse} that the \gls{mse} indeed attains a value close to $2.0$ for certain methods which perform well in model-randomization-type sanity checks, such as gradient and integrated gradient. In light of above theorem, this finding is conspicuous as it may indicate a correlation $\sigma_{AB}$ and means $\mu_A$, $\mu_B$ close to zero, in the sense of high gradient shattering noise.

This result shows, that the contribution from the randomization of a model and the noise from the attribution map process are still entangled when using \gls{mse} for model-randomization-type sanity checks. Consequently, using attribution map processes with a lower degree of correlation within a patch makes them appear more favorable when using model-randomization-type sanity checks to compare attribution maps. As noted before, a lower degree of correlation may originate from using a process with a high amount of zero-correlation noise, and in the worst case, from a statistically independent random process.  

On a side note, using an unnormalized MSE would result in
\begin{align}
E\left[\left( A - B\right)^2\right] & = E[A^2] -2E[AB] + E[B^2]\\
& = \sigma_A^2   -2\sigma_{AB}  + \sigma_B^2 \nonumber\\ &+E[A]^2 -2E[A]E[B]+ E[B]^2\\
& = \sigma_A^2  + \sigma_B^2  + (\mu_A-\mu_B)^2 -2\sigma_{AB}\\
& \le \sigma_A^2  + \sigma_B^2 + (\mu_A-\mu_B)^2
\end{align}
This would again be a measure that is maximized among the set of processes with non-negative covariance by using $\sigma_{AB}=0$ and additionally be sensitive to increasing patch-wise variances $\sigma_A^2$, $\sigma_B^2$ in processes. The proposed normalization by the second moment puts a bound on the sensitivity to patch-wise variances.

In summary, this section shows that replacing minimization of a similarity by maximization of a well-known squared distance still retains a sensitivity and possible preference towards attribution map methods with low correlation when viewed as a statistical process.

\subsection{A Note on Normalization}
The scores obtained from different attribution methods generally do not share the same range of values. Therefore, in order to compare them, some sort of normalization is required, however, care has to be taken in doing so as to not alter or destroy any information provided by attributions. The seminal work on model-randomization-based sanity checks \cite{DBLP:conf/nips/AdebayoGMGHK18} uses normalization by division with statistics using the maximal absolute value of an attribution map. This, however, may introduce additional variance in the measurement when computing differences of attribution maps: 

Minima or maxima are the only statistics among quantile estimators which do not converge for an increasing sample size to a finite expectation. One can easily see this by considering random draws from a normal distribution. The maximum will tend to infinity as the sample size $n\rightarrow \infty$ increases. 

More formally, as noted in \cite{Stuart1991}, the distribution of several known quantile estimators for the p-th quantile of a distribution is approximately normal with a variance of 
\begin{equation}
    \sigma^2 = \frac{  p(1-p)}{n f(F^{-1}(p))}
\end{equation}
 where $f(\cdot)$ is the density, $F(\cdot)$ the cumulative density of the distribution which is used to draw samples used to compute the p-th quantile estimator, and $n$ is the sample size. Thus for quantile estimators $p \approx 0$, $p \approx 1$ with values $F^{-1}(p)$ at the tails of the distribution, where the value of the density $f(\cdot)$ is low, the variance $\sigma^2$ will become unbounded, as long as $f(F^{-1}(p))$ decays faster than $\mathcal{O}(p^{-1})$ or $\mathcal{O}( (1-p)^{-1})$, which is the case for a higher degree polynomial or exponential decay. 

It should be noted that normalization aiming at a proper perception by the human eye and normalization for the sake of comparability of distances are non-equivalent goals. The former needs to ensure a bounded range, and color intensities which are well perceivable.

Normalization by the maximum yields a high variance of the estimator, and, while suitable for visualization to the human eye, does not preserve a quantity useful for the comparison of distances across different models under parameter randomization. For this reason we will consider a different normalization as outlined above.


\section{Proof of Theorem 2}
\label{ssec:app:proofofthm2}
\begin{proof}
To see this, consider two sets of non-negative input activations for a neuron, $X_{L}$ and $X_{S}$. We assume that each input from $X_{L}$ is by a factor of $K$ larger than each input from $X_{S}$ such that: 
\begin{align}
\min_{x_l \in X_L} x_l \ge K \max_{x_s \in X_S} x_s \ . \label{eq:thm2requirement} 
\end{align}

In order for a single $x_s$ to have at least the same effect on the output as a single $x_l>0$, it requires $w_s x_s \ge w_l x_l$ and thus for the weights $w_s \ge Kw_l$. This corresponds to a ratio distribution of two zero-mean normal variables, which is known to have a Cauchy density, as for example shown in \cite{10.1093/biomet/56.3.635}
\begin{align}
f\left(\frac{w_s}{w_l} =K\right) = \frac{1}{\pi \gamma} \frac{1}{K^2/\gamma^2+1}, \ \gamma = \frac{\sigma_s}{\sigma_l}  \ . 
\end{align}
The quantity of interest in this case is the tail-CDF 
\begin{align}
P\left(\frac{w_s}{w_l} \ge K\right)=1-CDF_{\gamma}(K)\ .
\end{align}

In order for each of the neurons in the small-value set to have the {same summed contribution} to the output, we require
\begin{align}
\sum_{ x_s \in X_S} w_s x_s \ge \sum_{ x_l \in X_L} w_l x_l~. \label{eq:xsgexl}
\end{align}

This can be combined together as follows. 
\begin{align}
\sum_{ x_s \in X_S} w_s x_s \text{ and } \sum_{ x_l \in X_L} w_l x_l
\end{align}
are normally distributed random variables with respect to draws of the weights $w$ with zero mean and variances
\begin{align}
\sigma_S= \sum_{ x_s \in X_S} x_s^2 \text{ and } \sigma_L=\sum_{ x_l \in X_L} x_l^2 ~.
\end{align}

Thus, for $\sum_{ x_l \in X_L} w_l x_l>0$ the requirement in Equation \eqref{eq:xsgexl} translates into the probability of the ratio 
\begin{align}
\frac{\sum_{ x_s \in X_S} w_s x_s}{\sum_{ x_l \in X_L} w_l x_l} \ge 1
\end{align}
This is the cumulative tail probability $P(Z \ge 1) = 1 -CDF_{\gamma}(1)$ with a parameter $\gamma_1$ given as
\begin{align}
\gamma_1 = \sqrt{\frac{\sigma_S}{\sigma_L}} =  \sqrt{\frac{\sum_{ x_s \in X_S} x_s^2}{\sum_{ x_l \in X_L} x_l^2}}
\end{align}
The Cauchy distribution obtains larger cumulative tail probabilities for larger values of the parameter $\gamma$. Therefore for an upper bound on cumulative tail probabilities, we need to obtain an upper bound on $\gamma_1$.
\begin{align}
\gamma_1 &=   \sqrt{\frac{\sum_{ x_s \in X_S} x_s^2}{\sum_{ x_l \in X_L} x_l^2}} \label{eq:gamma1} \\
&\le  \sqrt{\frac{\sum_{ x_s \in X_S} \max_{x_s \in X_S} x_s^2}{\sum_{ x_l \in X_L} \min_{x_l \in X_L} x_l^2 }} \\
&= \sqrt{\frac{|X_S| \max_{x_s \in X_S} x_s^2}{|X_L| \min_{x_l \in X_L} x_l^2 }}\\
& \stackrel{\text{Eq.\eqref{eq:thm2requirement}}}{\le} \sqrt{\frac{|X_S| \frac{1}{K^2}\min_{x_l \in X_L} x_l^2}{|X_L| \min_{x_l \in X_L} x_l^2 }} \\
&= \sqrt{\frac{|X_S|}{|X_L|}}\frac{1}{K}
\end{align}
where we used Equation \eqref{eq:thm2requirement} to get a term depending on $K$.
Plugging in this upper bound $\gamma_1$ into the CDF shows
\begin{align}
CDF_{\gamma_1}(1) &= 0.5+\frac{1}{\pi} \arctan ( \frac{1-0}{\gamma_1} ) \\
&= 0.5+\frac{1}{\pi} \arctan \left( \frac{K}{\sqrt{\frac{|X_S|}{|X_L|}}} \right)\\
& = CDF_{\gamma_2}(K), \ \gamma_2=  \sqrt{\frac{|X_S|}{|X_L|}}~.
\end{align}

Therefore we obtain the cumulative tail CDF of a Cauchy distribution from the value of $K$ onwards $P(Z \ge K)$  with a parameter $\gamma_2=  \sqrt{\frac{|X_S|}{|X_L|}}$.
\end{proof}

Section \ref{ssec:probforwardpassovertaking} in this supplement provides estimates for this probability for three trained deep neural networks which provides empirical evidence for the sparsity.\\

If one would consider average contributions 
\begin{align}
\frac{1}{|X_S|}\sum_{ x_s \in X_S} w_s x_s \ge \frac{1}{|X_L|}\sum_{ x_l \in X_L} w_l x_l~, 
\end{align}
then one would obtain the analogous result with an inverted parameter $\gamma_{2,avg} =\sqrt{\frac{|X_L|}{|X_S|}}$.

A reason to consider such averages instead of sums would be the case when one is interested to analyze when two regions of an input would achieve the same average explanation score per input element of the respective regions. This case corresponds in an attribution map to two regions with the same average color intensity per pixel.

This can be shown as follows. If we consider 
\begin{align}
\frac{1}{|X_S|}\sum_{ x_s \in X_S} w_s x_s \text{ and } \frac{1}{|X_L|}\sum_{ x_l \in X_L} w_l x_l~,
\end{align}
then these are normally distributed random variables with respect to draws of the weights $w$ with zero mean and variances
\begin{align}
\sigma_S= \frac{1}{|X_S|^2}\sum_{ x_s \in X_S} x_s^2 \text{ and } \sigma_L=\frac{1}{|X_L|^2}\sum_{ x_l \in X_L} x_l^2 ~.
\end{align}
The difference to the proof above is a multiplicative factor in $\gamma_1$ in Equation \eqref{eq:gamma1} of
\begin{align}
\sqrt{\frac{\frac{1}{|X_S|^2}}{\frac{1}{|X_L|^2}}} &= \frac{|X_L|}{|X_S|} \\
\Rightarrow \gamma_{2,avg} &= \frac{|X_L|}{|X_S|} \gamma_2 = \frac{|X_L|}{|X_S|} \sqrt{\frac{|X_S|}{|X_L|}}  = \sqrt{\frac{|X_L|}{|X_S|}}
\end{align}

\section{The Monotonicity Property of selected Explanation Methods}
\label{ssec:monotonicityofrelevance}

We show here that several explanation methods satisfy the positive monotonicity property that if we consider two inputs $x_i$, $x_j$ which have no other connections except to neuron $y$, then $w_ix_i \ge w_j x_j>0$ implies $|R(x_i)| \ge |R(x_j)|$ .

\subsection{Positive Monotonicity for Gradient $\times$ Input}

\begin{align}
z &  = g( \sum_k w_k x_k +b )\\
    R(x_i) &= \frac{\partial f }{\partial z} \frac{\partial z }{\partial x_i}(x) x_i =   \frac{\partial f }{\partial z} g'(\cdots) w_i x_i\\
 &   \sum_k w_k x_k +b >0,  w_ix_i > w_j x_j>0 \Rightarrow \\
 | R(x_i)| &= \left|\frac{\partial f }{\partial z}\right| |g'(\cdots)| | w_i x_i| \\
 > |R(x_j)| & = \left|\frac{\partial f }{\partial z}\right| |g'(\cdots)| | w_j x_j|
\end{align}
In fact, a stronger version holds here:  $|w_ix_i| \ge | w_j x_j| $ implies $|R(x_i)| \ge |R(x_j)|$

\subsection{Positive Monotonicity for Shapley Values}

This holds when $w_i x_i > w_j x_j >0$ and the activation function $g$ is monotonously non-decreasing. In that case, for all subsets $S: i \notin S, j\notin S$:
\begin{align}
f(S \cup \{i\} )&= g (\sum_{k \in S } w_k x_k +b  + w_i x_i) \\
& \ge  g (\sum_{k \in S } w_k x_k +b  + w_j x_j) = f(S \cup \{j\} )\\
\Rightarrow \phi(i) & =  \sum_S c_{|S|} (f(S \cup \{i\} ) - f(S)) \\
&\ge \sum_S c_{|S|} (f(S \cup \{j\} ) - f(S))  =  \phi(j)\ ,
\end{align}
where 
\begin{align}
c_{|S|} = \frac{1}{d \binom{d-1}{|S|}}
\end{align}
are the normalizing constants used in the exact computation of Shapley values.

\subsection{Positive Monotonicity for LRP-$\beta$}

\begin{align}
R(i) &= R(z) (1+\beta) \frac{ (w_i x_i)_{+} }{\sum_k (w_k x_k)_{+}} \nonumber\\
&- R(z) \beta \frac{(w_i x_i)_{-} }{ \sum_k (w_k x_k)_{-} }\\
w_ix_i>0 & \Rightarrow R(i) = R(z) (1+\beta) \frac{ (w_i x_i)_{+} }{\sum_k (w_k x_k)_{+}}  \\
 w_j x_j>0 & \Rightarrow R(j) = R(z) (1+\beta) \frac{ (w_j x_j)_{+} }{\sum_k (w_k x_k)_{+}}  \\
w_ix_i \ge w_j x_j>0 & \Rightarrow  (w_ix_i)_+ \ge (w_j x_j)_+\\
\Rightarrow |R(i)| &= |R(z)| (1+\beta) \frac{ (w_i x_i)_{+} }{\sum_k (w_k x_k)_{+}} \\
&\ge |R(j)|
\end{align}

In fact, a stronger version holds here:  $|w_ix_i| \ge | w_j x_j| $ and $sign(w_ix_i)= sign(w_jx_j)$ implies $|R(x_i)| \ge |R(x_j)|$ .

\section{Positive Explanation Score Dominance in ReLU Networks with Positive Logits}
\label{ssec:posdominance}

In this section we briefly show another property to hold, when explaining positive logits in ReLU networks with non-positive biases, irrespective of the randomization. 

The property is that the positive evidence will dominate the negative evidence in every layer until the input, \emph{under the condition that the explanation is additive for ReLU units with positive outputs.} An exception to it would occur when one has large positive biases, and one would attribute explanation scores to the bias terms itself.

Consider a positive logit $f(x)$ as a linear combination of the last layer activations $\phi^{(L)}$  with a non-positive bias $b \le 0$:
\begin{align}
    0 <& f(x) = \sum_i w_i \phi^{(L)}_i(x) +b \\
    0<&R\left( \sum_i w_i \phi^{(L)}_i(x) \right) = 
    \sum_i R\left( w_i \phi^{(L)}_i(x) \right) \label{eq:relstart}
\end{align}
We can see that the explanations for the last layer activations must sum to a positive value as well. Now let us consider the output of a ReLU feature \begin{align}
\phi^{(L)}_i(x) = ReLU\left( \sum_k w_k \phi^{(L-1)}_k(x) +b\right) \ .
\end{align} 
If the negative contributions to it dominate, then the output value of the ReLU is zero. This has the meaning that this neuron detects no feature. In this case $ R\left( w_i \phi^{(L)}_i(x) \right) =0$, and no explanation scores will be propagated back to its inputs $\phi^{(L-1)}_k(x)$, that is $R(\phi^{(L-1)}_k(x))=0$ received along this path from $\phi^{(L)}_i(x)$. 

If positive contributions to it dominate, then $0< \mathrm{ReLU}$  and we use the same idea as in the previous section:
\begin{align}
0<& \mathrm{ReLU}\left( \sum_k w_k \phi^{(L-1)}_k(x)\right)= \sum_k w_k \phi^{(L-1)}_k(x) \end{align} 

\begin{align}
\phi^{(L)}_i(x) &= ReLU\left( \sum_k w_k \phi^{(L-1)}_k(x) +b\right)\nonumber\\
\Rightarrow R \left(w_i\phi^{(L)}_i(x)\right) &= R\left(\mathrm{ReLU}\left( \sum_k w_k  \phi^{(L-1)}_k(x)\right)\right) \nonumber\\
&= \sum_k R(w_k\phi^{(L-1)}_k(x))
\end{align} 

\begin{align}
\Rightarrow 0 < \sum_i  R\left( w_i \phi^{(L)}_i(x) \right) &= \sum_i \sum_k R(w_k\phi^{(L-1)}_k(x))
\label{eq:relLminus1} 
\end{align} 
We use here only additivity of explanations $R(\cdot)$, and non-assignment of explanation scores to bias terms. In summary, combining equation \eqref{eq:relstart} with \eqref{eq:relLminus1} shows that the sum of relevances in layer $L-1$ is positive and equal to the initial logit relevance. Iterating this through all layers proves the claim until the input. In practice, explaining positive logits with methods which satisfy such an additivity, will result in dominantly positive explanations.

\section{Top-down Model Randomization based Experiments}
\label{ssec:topdownrandomexp}

Please see Figures \ref{fig:ssim2} and \ref{fig:mse} for the results. For better comparability all attribution maps were normalized by the square root of their second moment (not their variance) as discussed in Section \ref{sec:msesens}. The results are in principle known from \cite{DBLP:conf/nips/AdebayoGMGHK18}.

\begin{figure*}[t!]
\centering     
\subcaptionbox{ResNet50}{\label{fig:ssim2:a}
\includegraphics[width=0.31\linewidth]{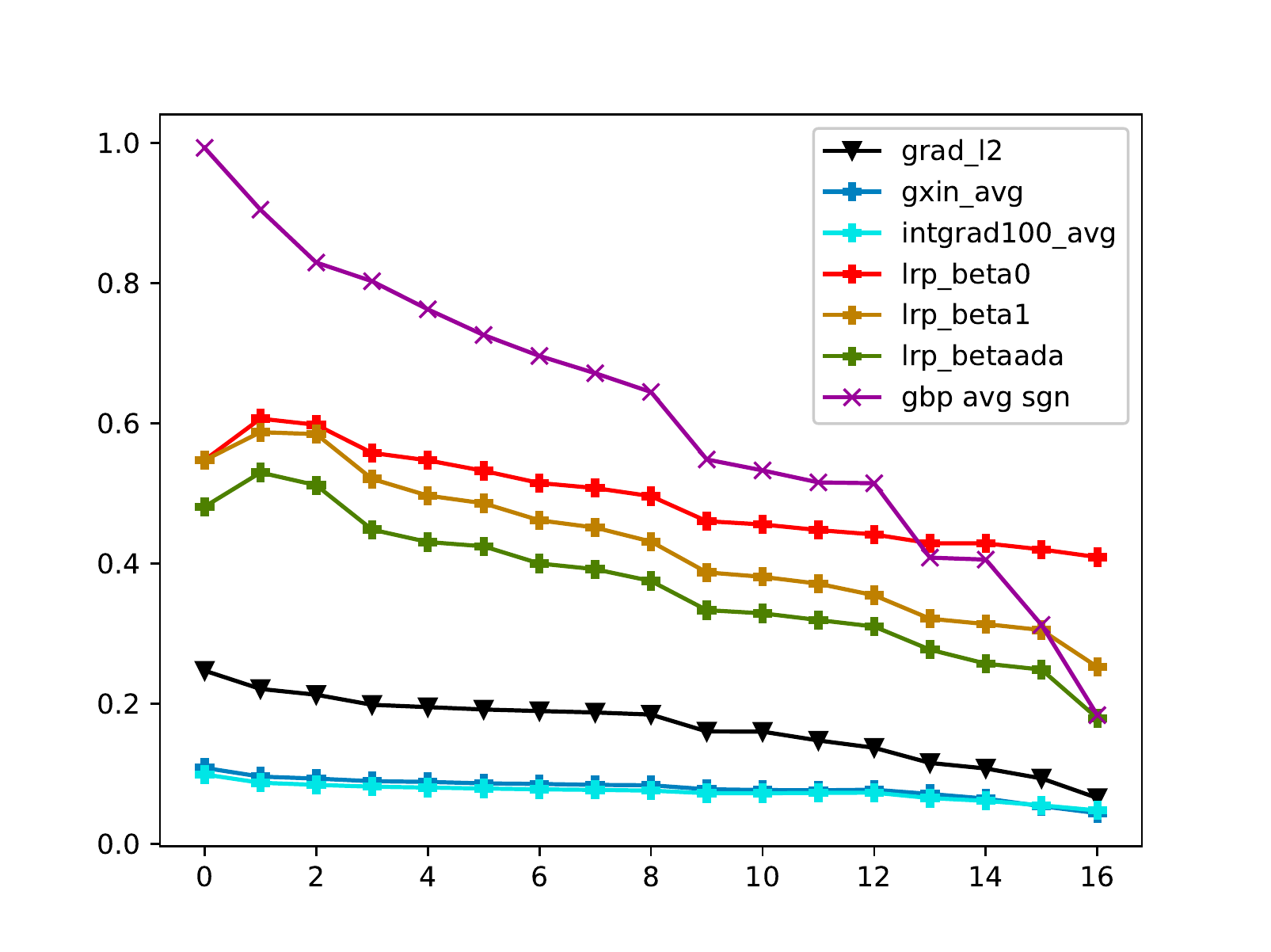}
}
\subcaptionbox{DenseNet121}{\label{fig:ssim2:b}
\includegraphics[width=0.31\linewidth]{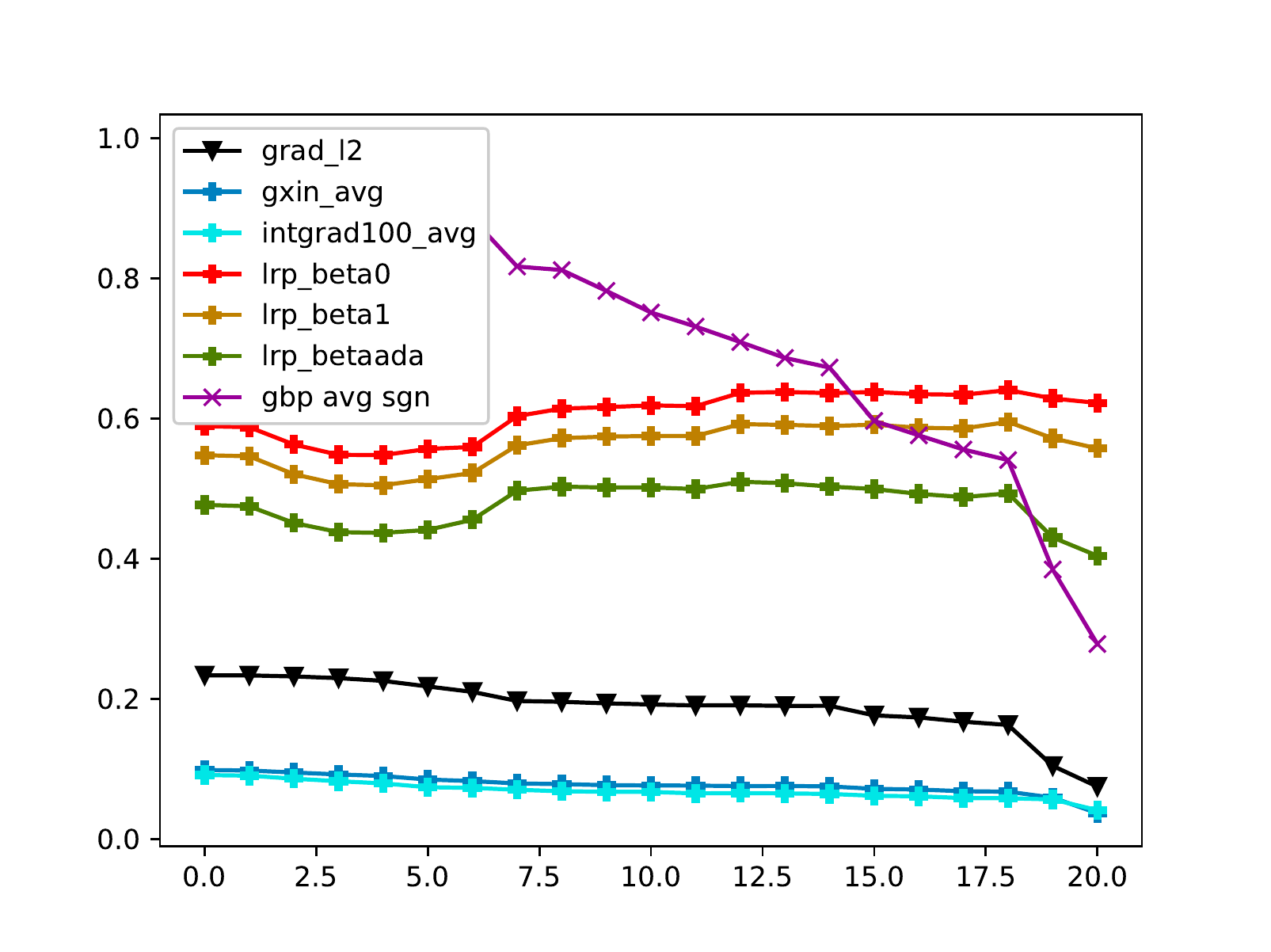}
}
\subcaptionbox{EfficientNet-B0}{\label{fig:ssim2:c}
\includegraphics[width=0.31\linewidth]{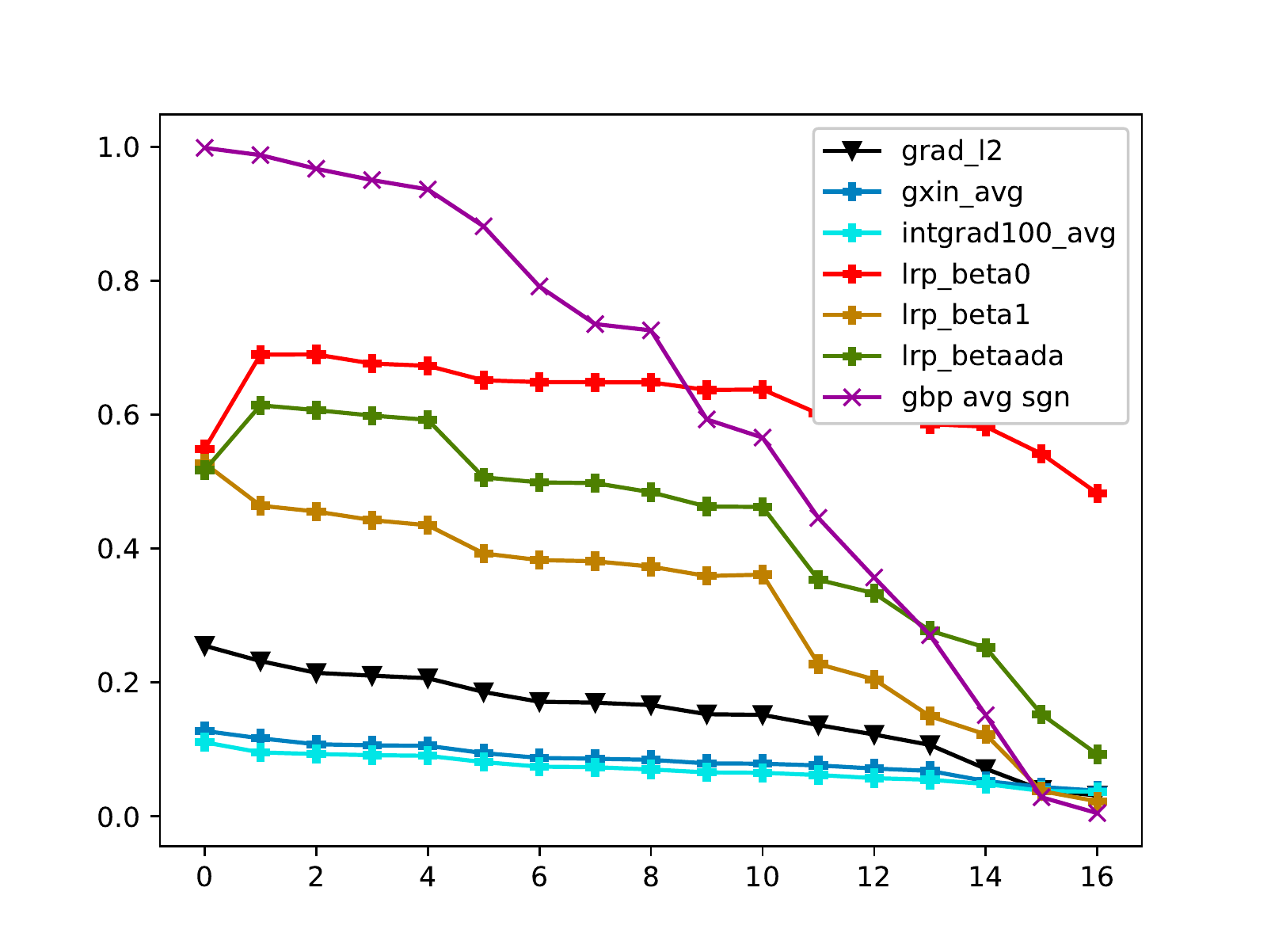}
}
\caption{\label{fig:ssim2} The figure shows the results of top-down model randomization-based sanity checks with \gls{ssim} after normalization of attribution maps by their second moment. \textit{Lower is better.}}
\end{figure*}

\begin{figure*}[t!]
\centering     
\subcaptionbox{ResNet50}{\label{fig:mse:a}
\includegraphics[width=0.31\linewidth]{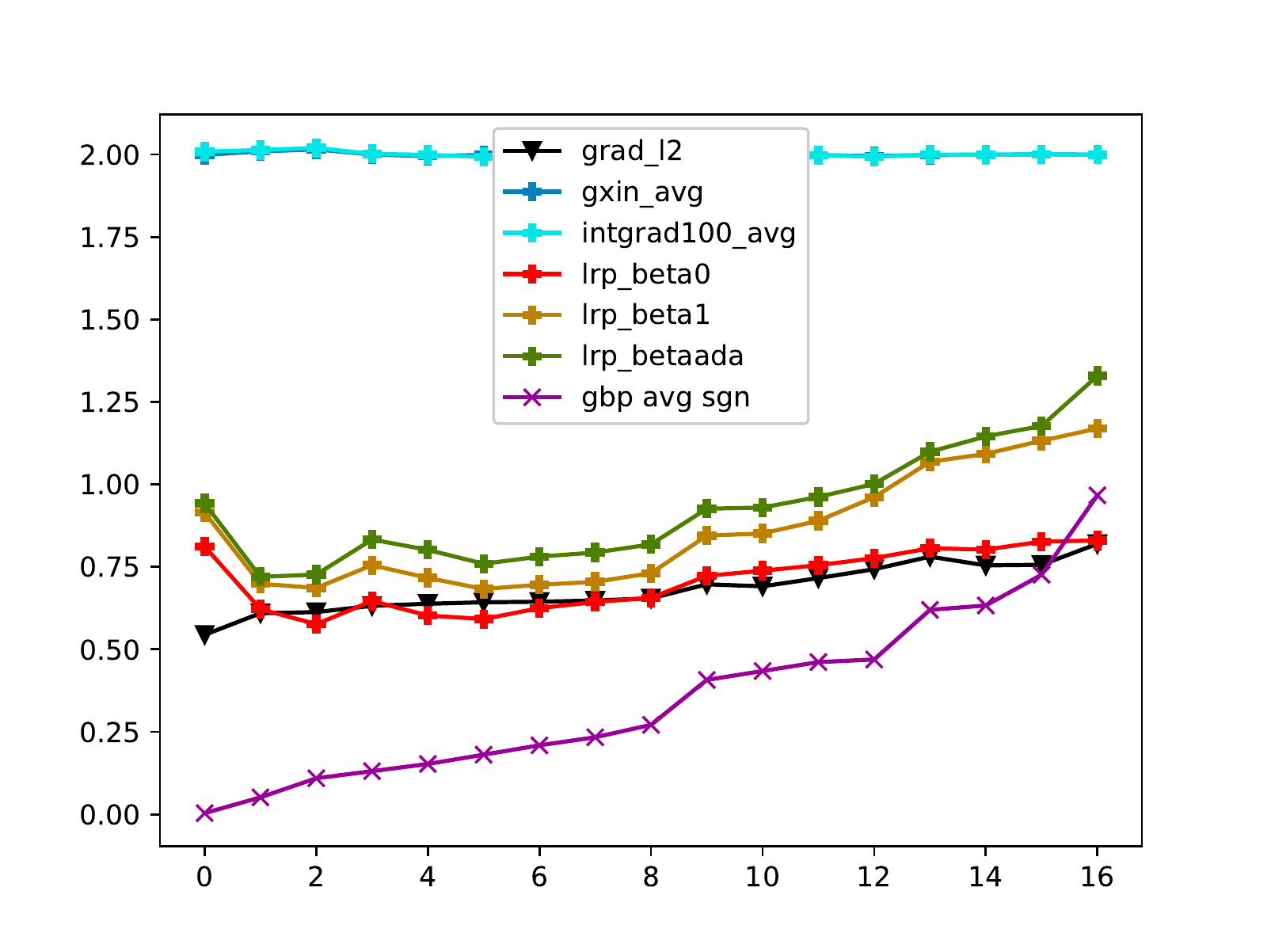}
}
\subcaptionbox{DenseNet121}{\label{fig:mse:b}
\includegraphics[width=0.31\linewidth]{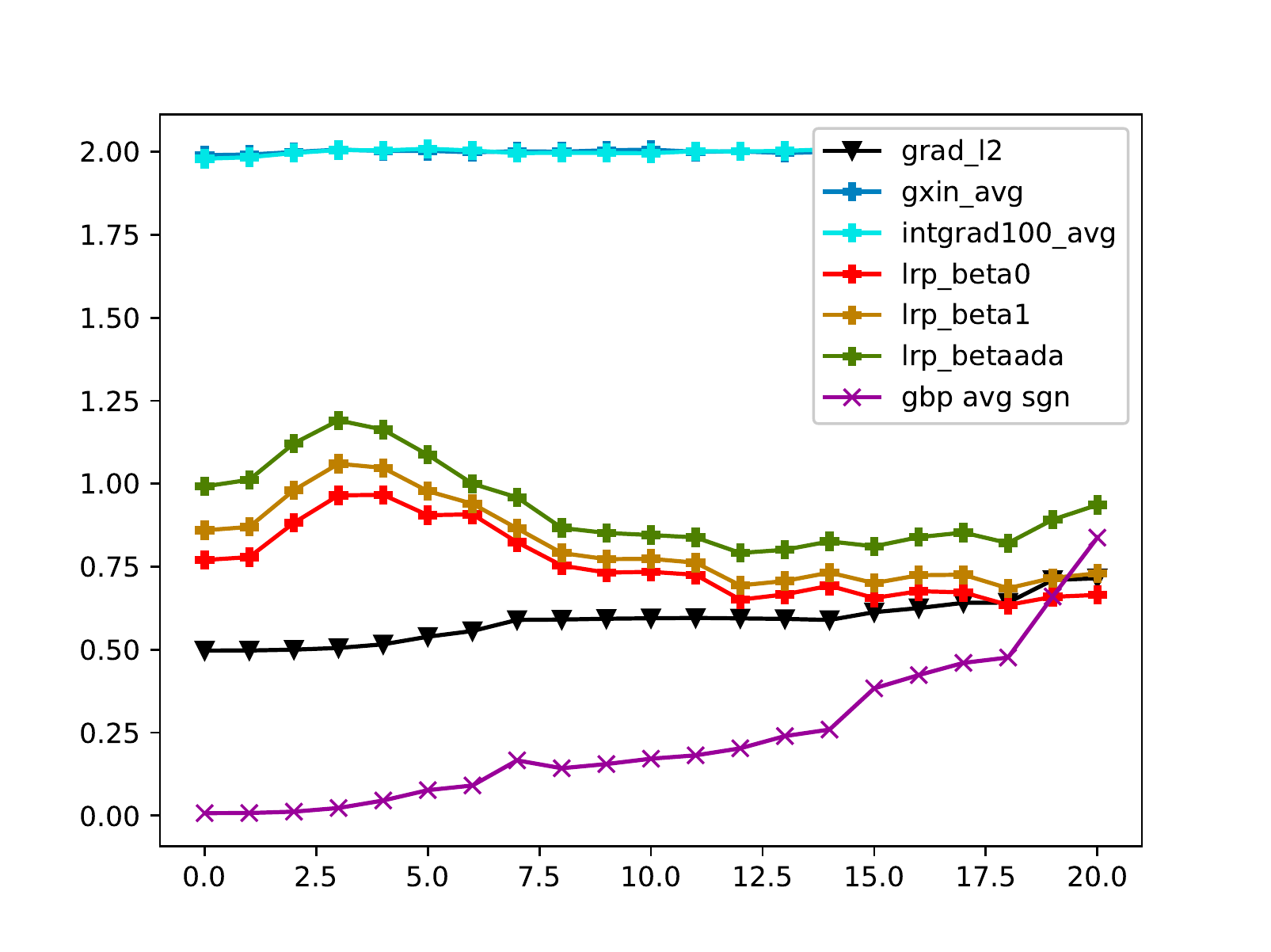}
}
\subcaptionbox{EfficientNet-B0}{\label{fig:mse:c}
\includegraphics[width=0.31\linewidth]{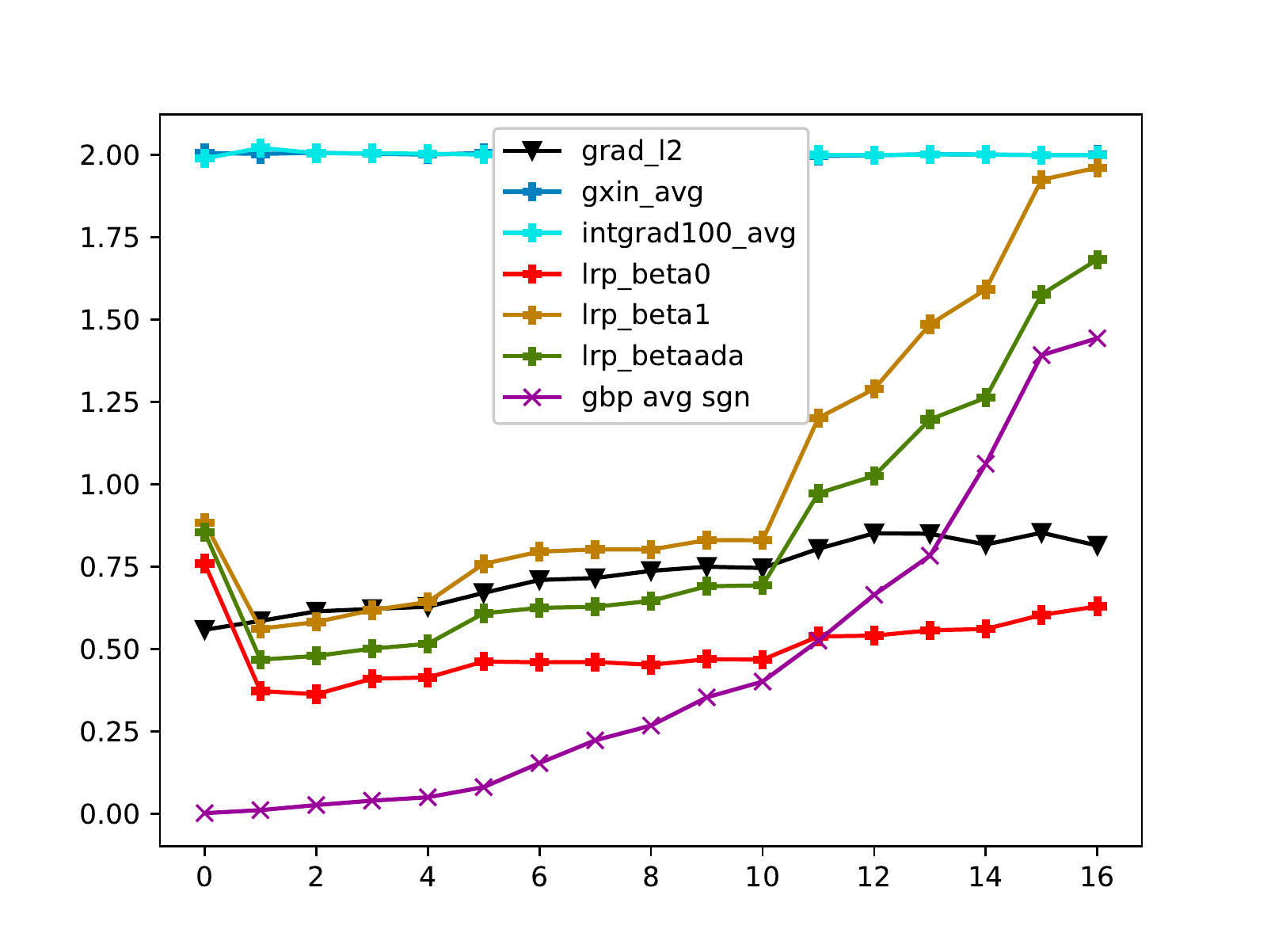}
}
\caption{\label{fig:mse} The figure shows the results of top-down model randomization-based sanity checks with \gls{mse} after normalization of attribution maps by their second moment. Of note is also the score of gradient-based results close to the value of $2$ in comparison with the upper bound in Equation \eqref{eq:normalizedmse}. \textit{Higher is better.}}
\end{figure*}

\section{Probabilities of overtaking large activations from forward pass activation statistics}
\label{ssec:probforwardpassovertaking}

This Section computes an upper bound for the probability of overtaking according to Theorem 2 for given trained models from Resnet-50, DenseNet-121 and EfficientNet-B0 architectures. This shows that in practice these probabilities are small.

To do this, we compute for a given image the forward pass activations, and pool them in every layer across spatial and channel dimensions (because usually a convolution kernel takes all channels as input). Next we compute a set of quantile estimators for these activation values in the range 
from $0.95$ down to $0.1$ in $0.05$ decrements. This yields 18 quantile estimators for every layer of the net and for one image. We compute the mean value of these estimators over 1000 images from the ImageNet validation set. 

After that we can compute estimates for the value of 
$\gamma=\sqrt{\frac{|X_S|}{|X_L|}}$
and $K$ from this information for every pair $(q_h,q_l)$ of a high quantile $q_h \in \{0.95,\ldots,0.85\}$ and a low quantile $q_l \in \{0.5,\ldots, 0.1\}$. $K$ is given as the ratio of quantile estimator values $K \ge \frac{V(q_h)}{V(q_l)} $, whereas $\gamma$ is given as 
$\sqrt{\frac{|X_S|}{|X_L|}} = \sqrt{\frac{q_l}{1-q_h}}$,
which corresponds to the relative fractions of the amount of bottom-k\% activations to the amount of observed top-k\% activations.   

Finally we can plug this into the Cauchy cumulative tail density $P_{\gamma}( Z \ge K)$ to obtain the probabilities. 

Each plot shows on the x-axis the low quantile $q_l \in \{0.5,\ldots, 0.1\}$, and on the y-axis $P_{\gamma}( Z \ge K)$. It shows one graph of probabilities $P_{\gamma}( Z \ge K)$ for each value of the high quantile $q_h \in \{0.95,0.9,0.85\}$. The graphs are color coded according to $q_h$.

The results are shown in Figures \ref{overtakingresnets1}, \ref{overtakingdensenets1} and \ref{overtakingeffnets1}.

We can see rather low probabilities 
despite the Cauchy distribution having a low order polynomial decay. 
Note that the EfficientNet can have negative activation statistics for some lower layers. In this case $K$ is computed using the inverse (because in this case one wants to overtake the absolute larger negative values using the absolute smaller negative values).

Some graphs, like for Resnet-50 levels 9 and 12 remain almost flat zero because the mean activation is very close to zero for the bottom-50\% values due to a strong sparsity in these layers. See Section \ref{sec:actstats} for the fraction of non-positive activations as an explanation, and compare the graph against Resnet-50 Level 6 and the corresponding statistics in Section \ref{sec:actstats}. We have verified that for higher bottom-\% values one would see small positive overtaking probabilities $P_{\gamma}( Z \ge K)$.

\begin{figure*}[t!]
\centering     
\subcaptionbox{ResNet50 Level0}{
\includegraphics[width=0.31\linewidth]{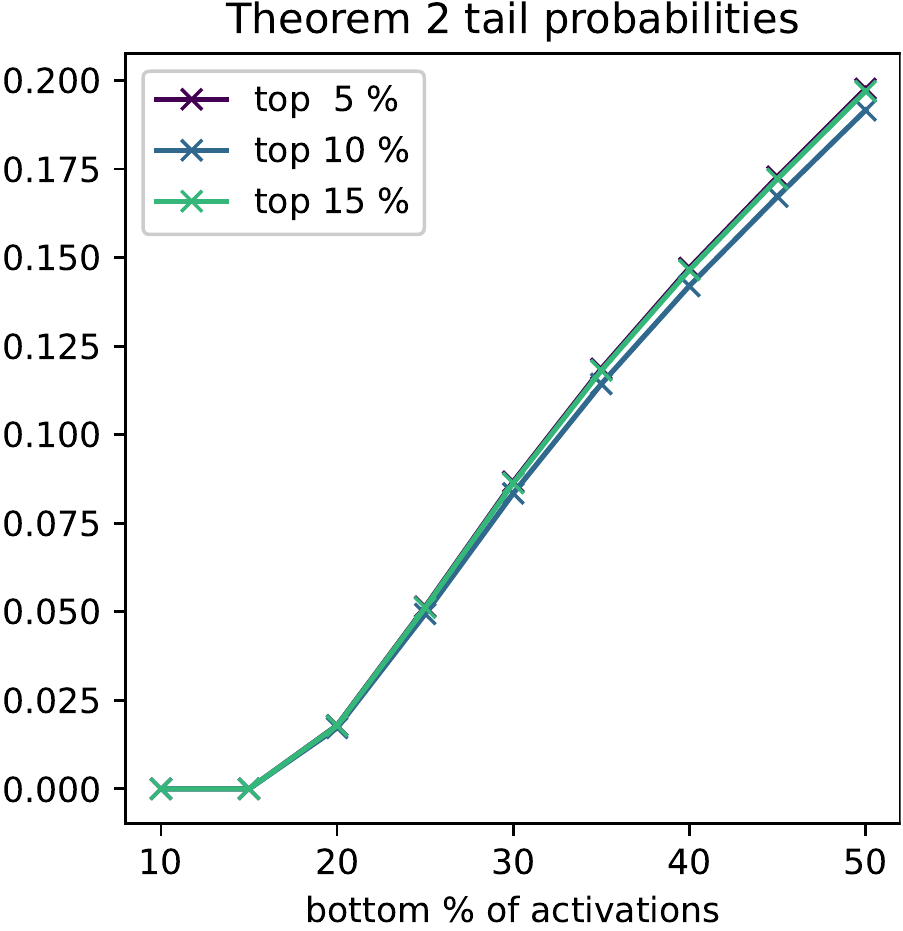}
}
\subcaptionbox{ResNet50 Level3}{
\includegraphics[width=0.31\linewidth]{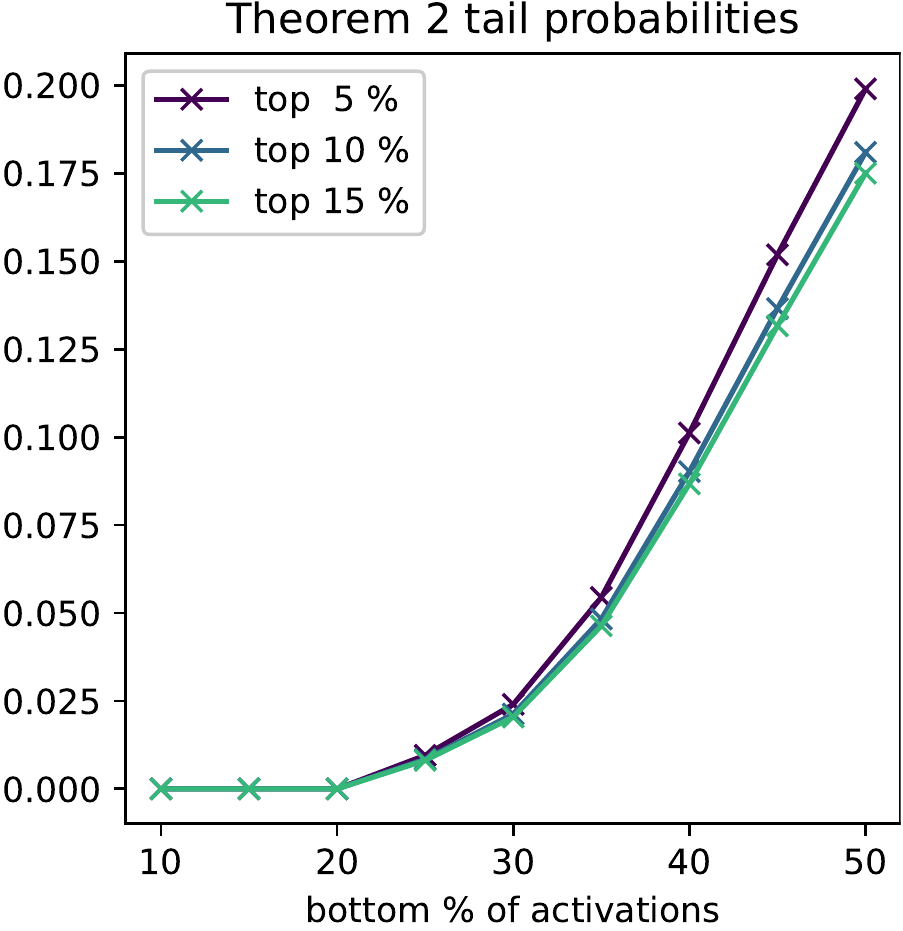}
}
\subcaptionbox{ResNet50 Level6}{
\includegraphics[width=0.31\linewidth]{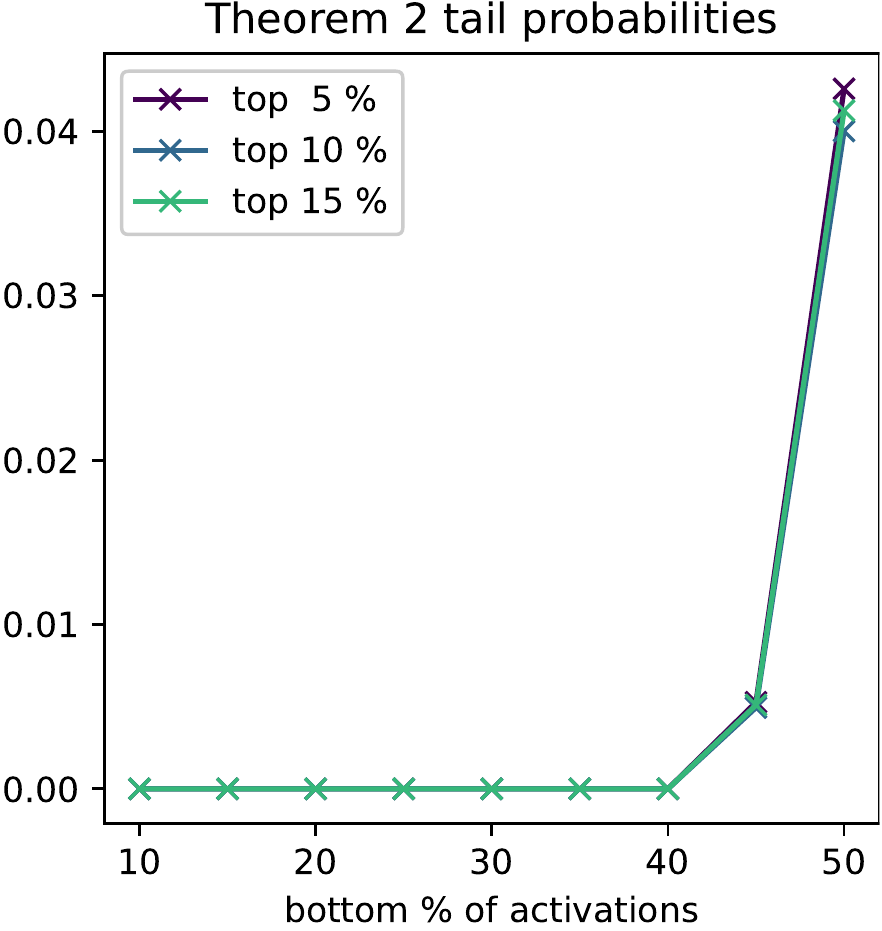}
}

\subcaptionbox{ResNet50 Level9}{
\includegraphics[width=0.31\linewidth]{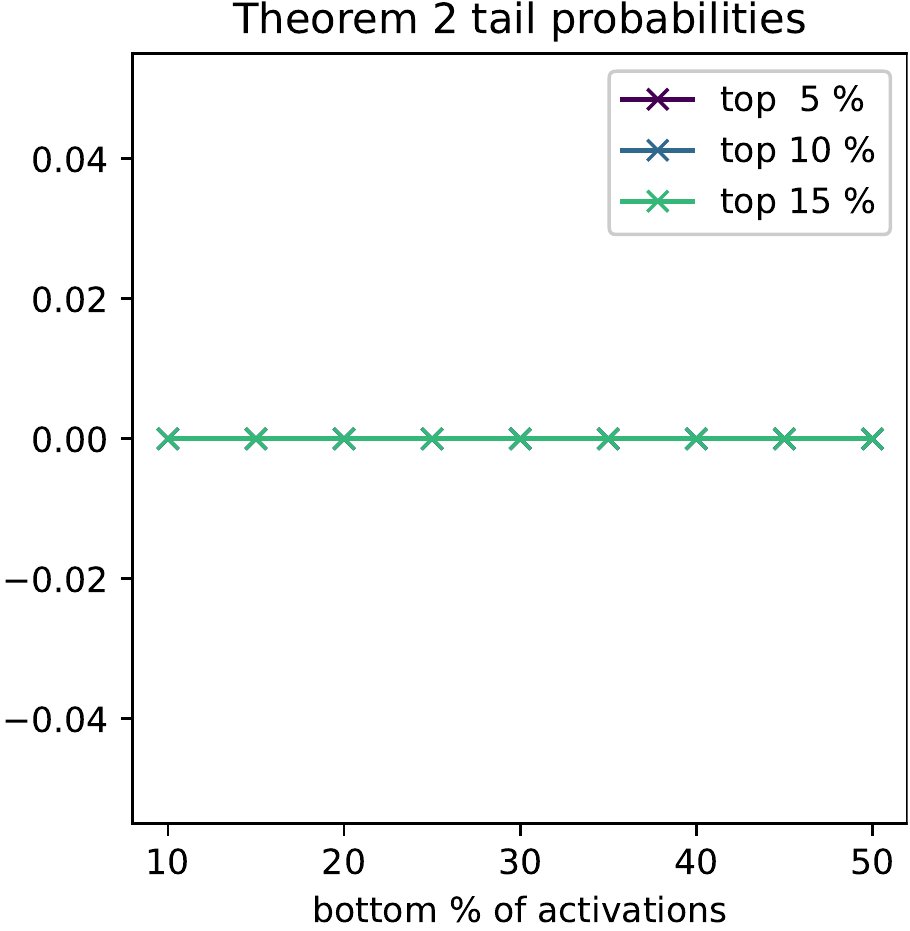}
}
\subcaptionbox{ResNet50 Level12}{
\includegraphics[width=0.31\linewidth]{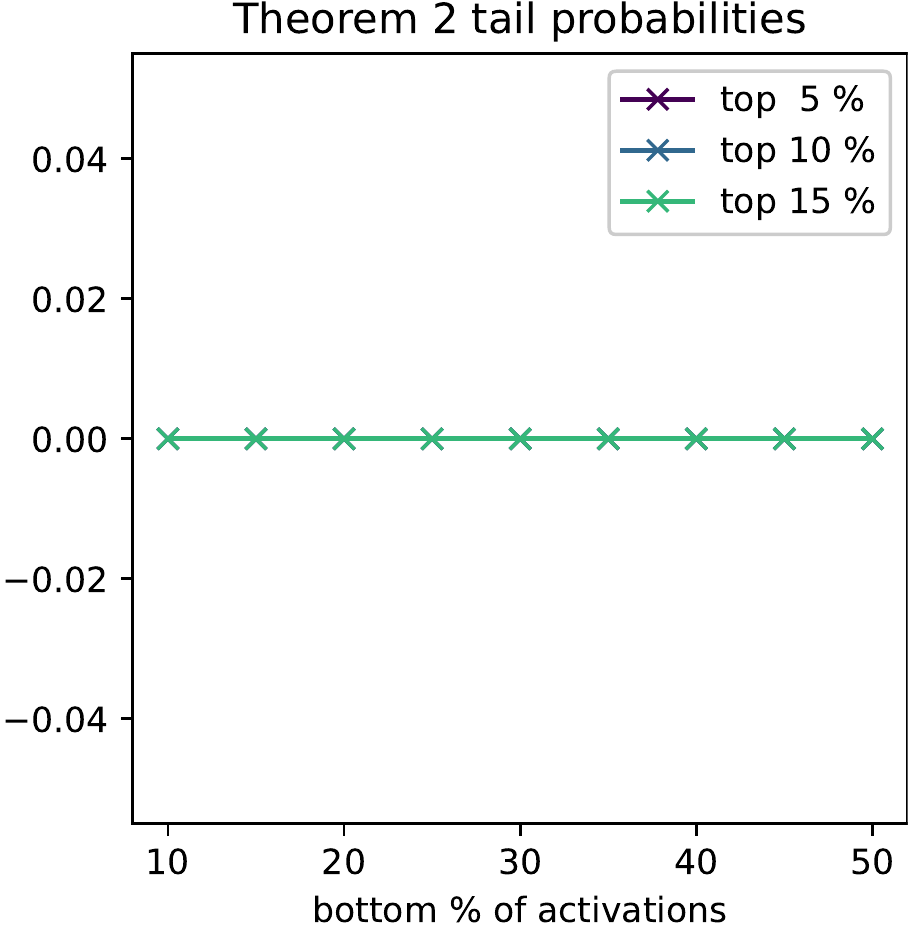}
}
\subcaptionbox{ResNet50 Level16}{
\includegraphics[width=0.31\linewidth]{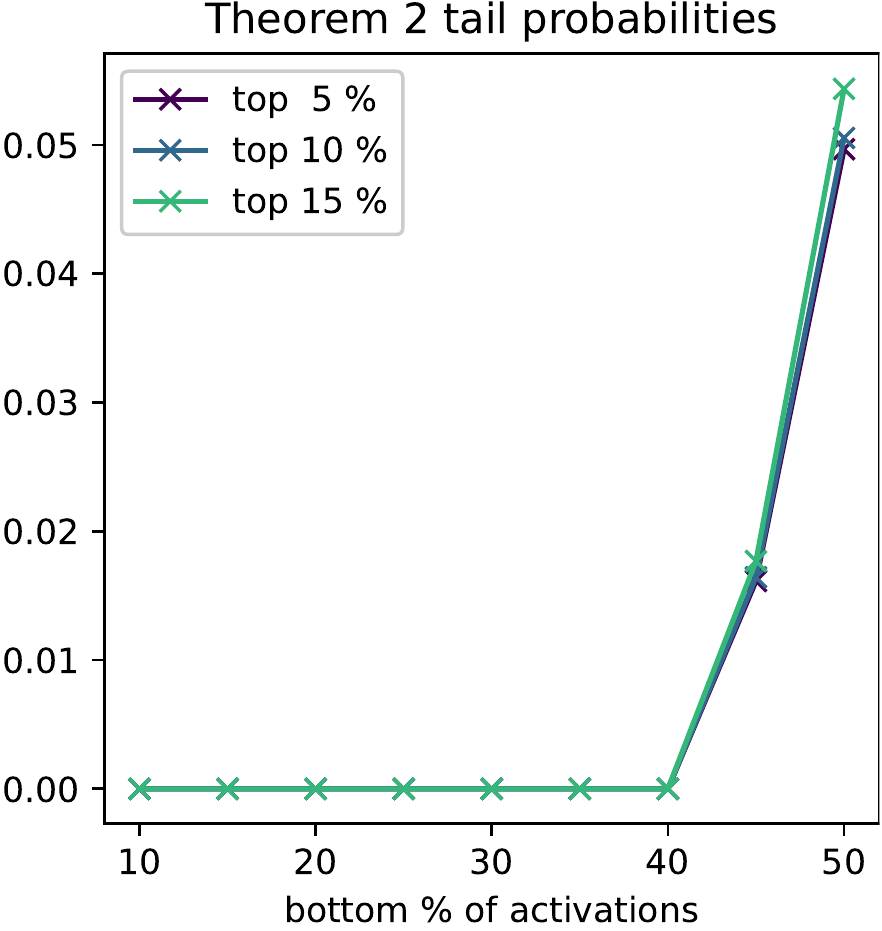}
}

\caption{\label{overtakingresnets1} Lower probabilities support Theorem 2 better.}
\end{figure*}

\begin{figure*}[t!]
\centering     
\subcaptionbox{DenseNet121 Level0}{
\includegraphics[width=0.31\linewidth]{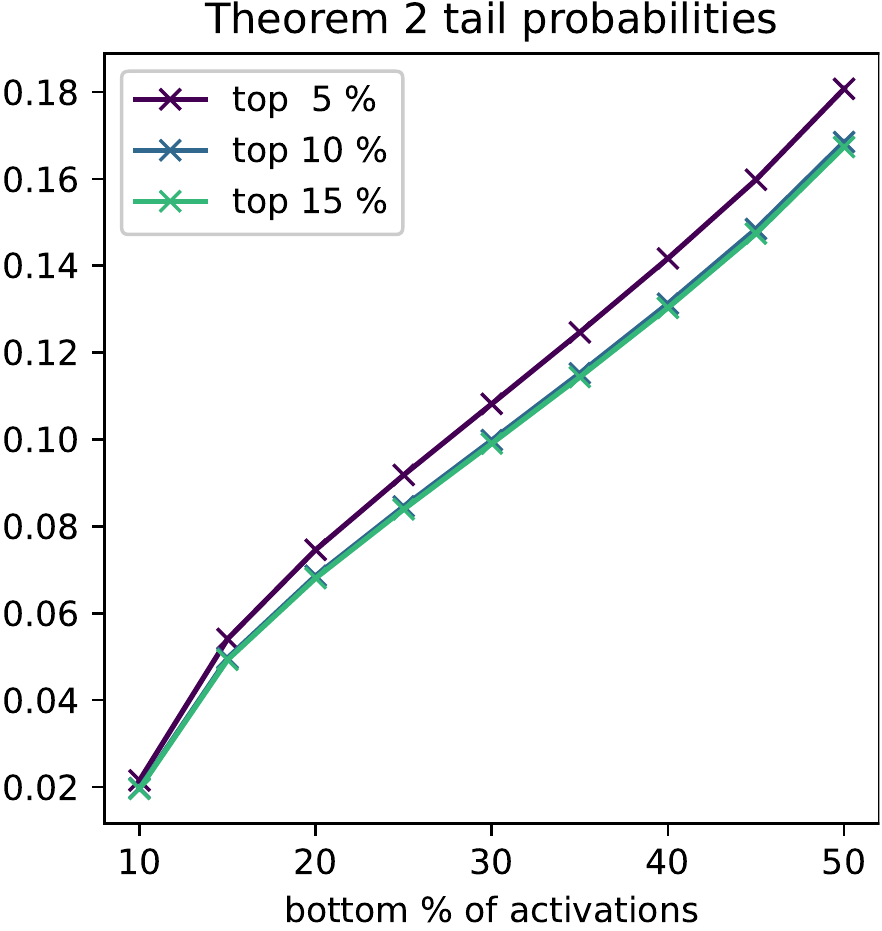}
}
\subcaptionbox{DenseNet121 Level10}{
\includegraphics[width=0.31\linewidth]{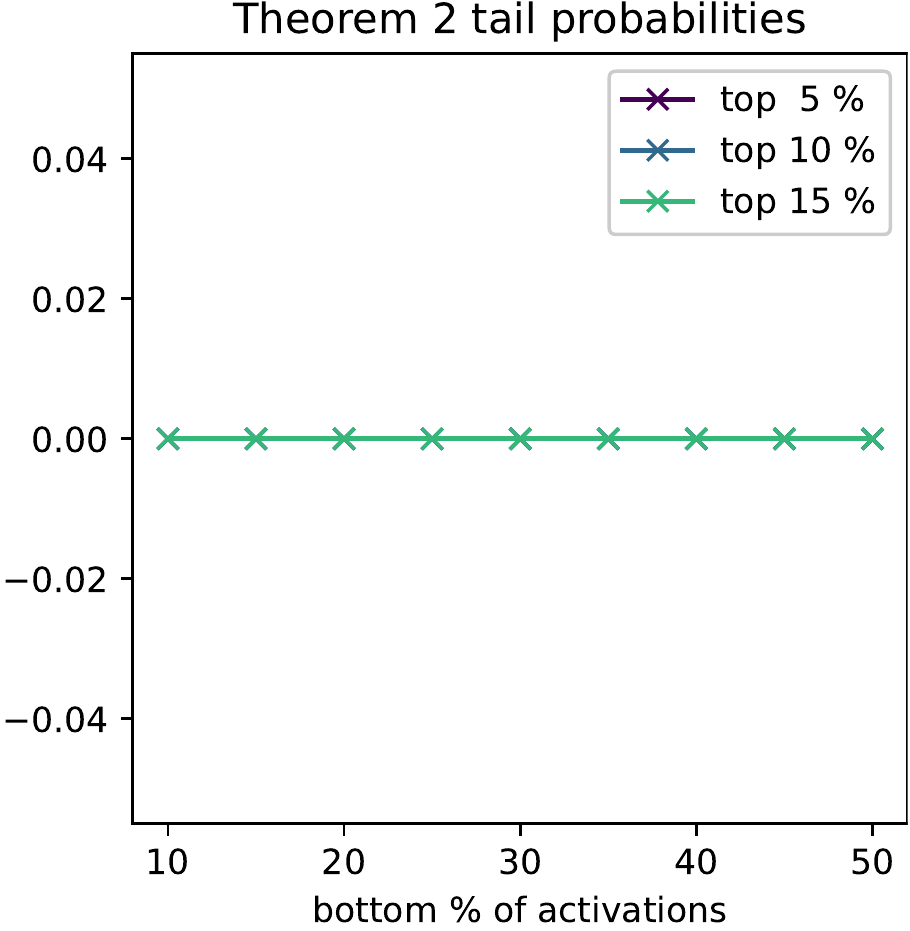}
}
\subcaptionbox{DenseNet121 Level20}{
\includegraphics[width=0.31\linewidth]{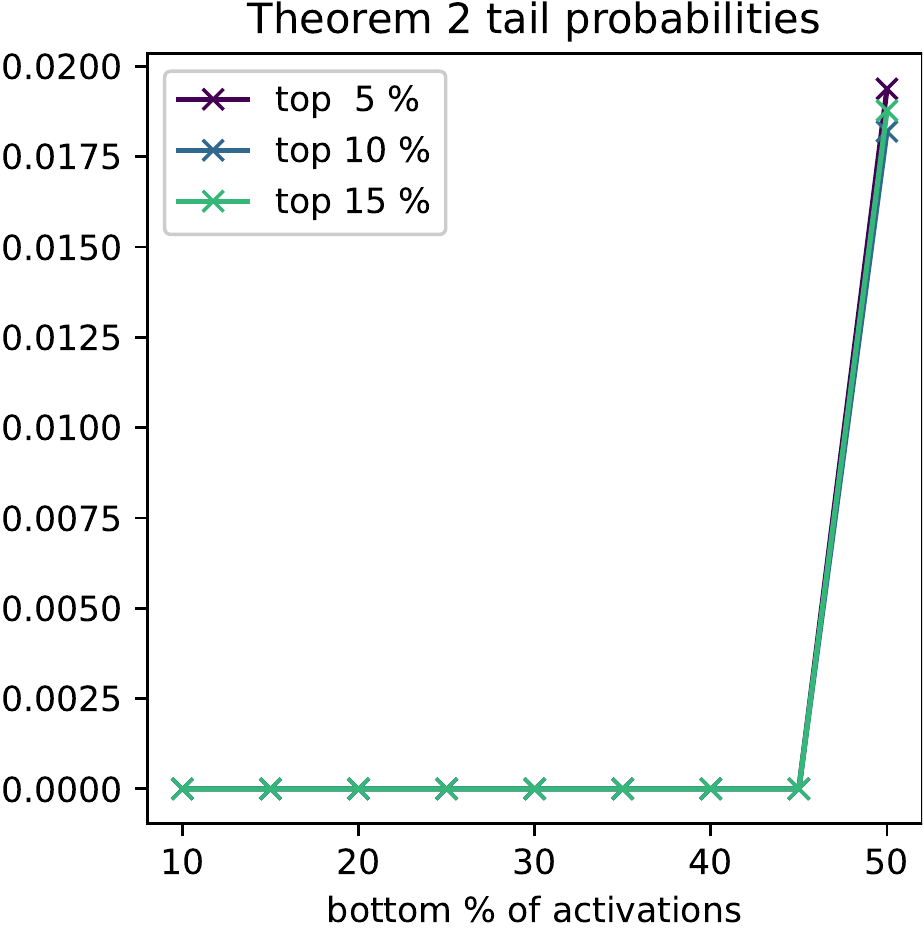}
}

\subcaptionbox{DenseNet121 Level30}{
\includegraphics[width=0.31\linewidth]{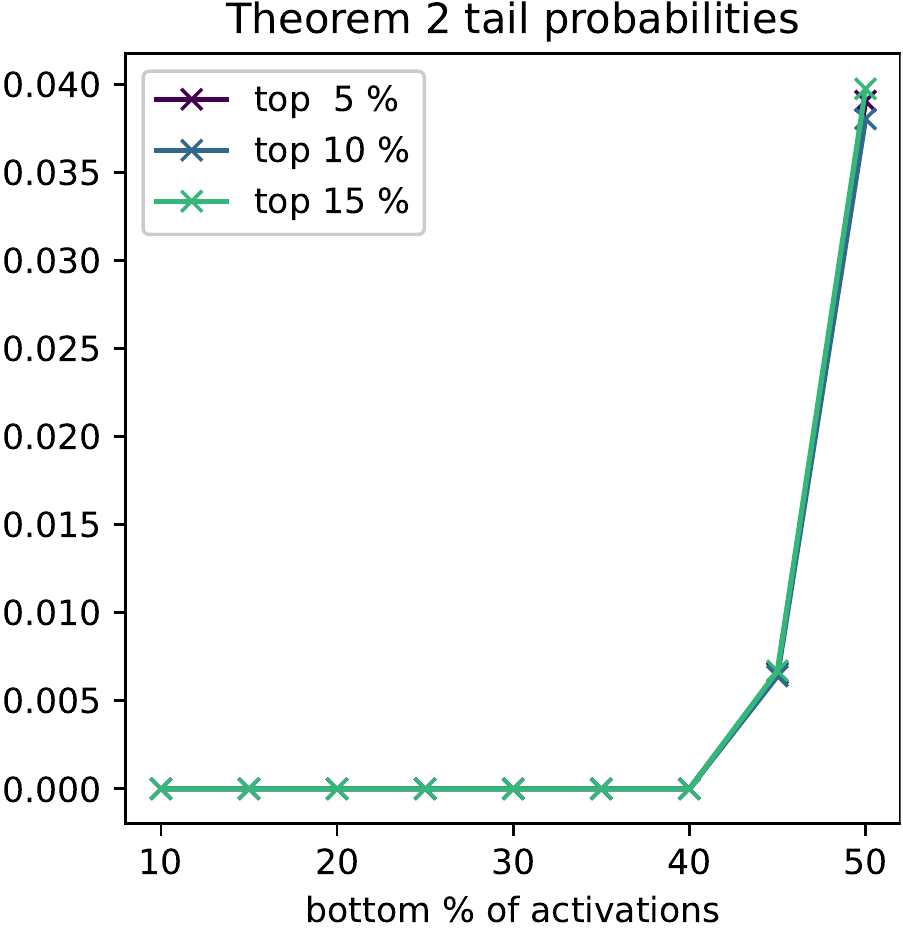}
}
\subcaptionbox{DenseNet121 Level73}{
\includegraphics[width=0.31\linewidth]{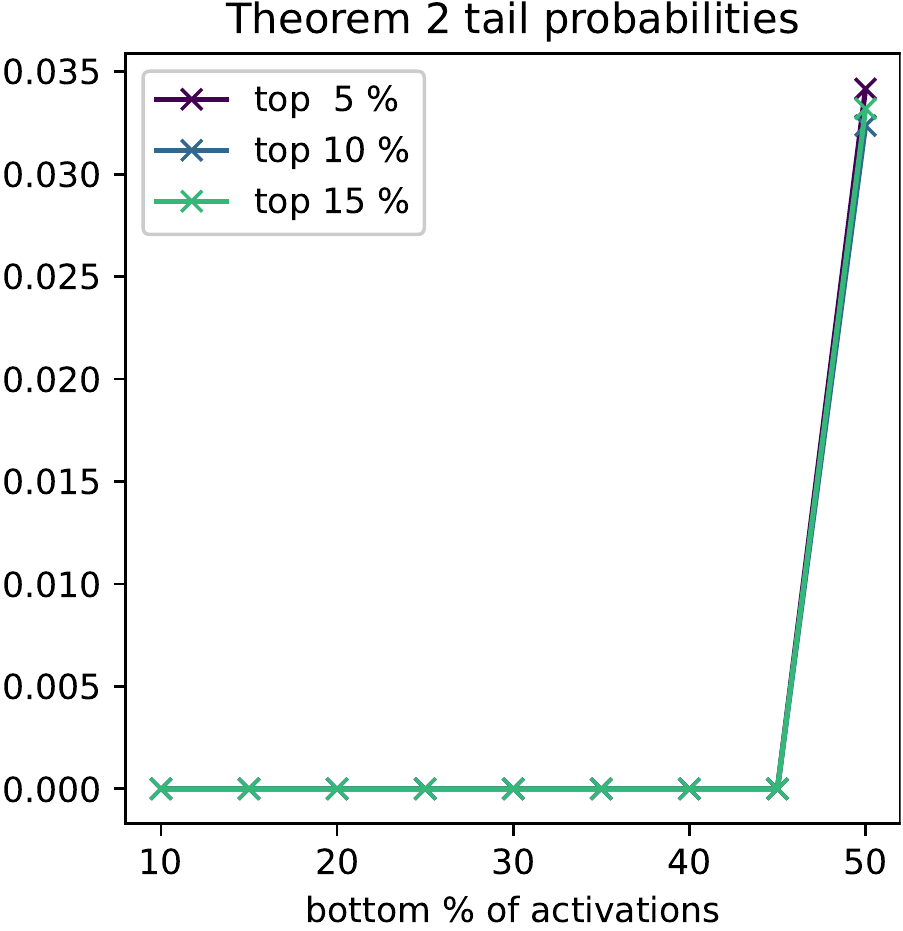}
}
\subcaptionbox{DenseNet121 Level74}{
\includegraphics[width=0.31\linewidth]{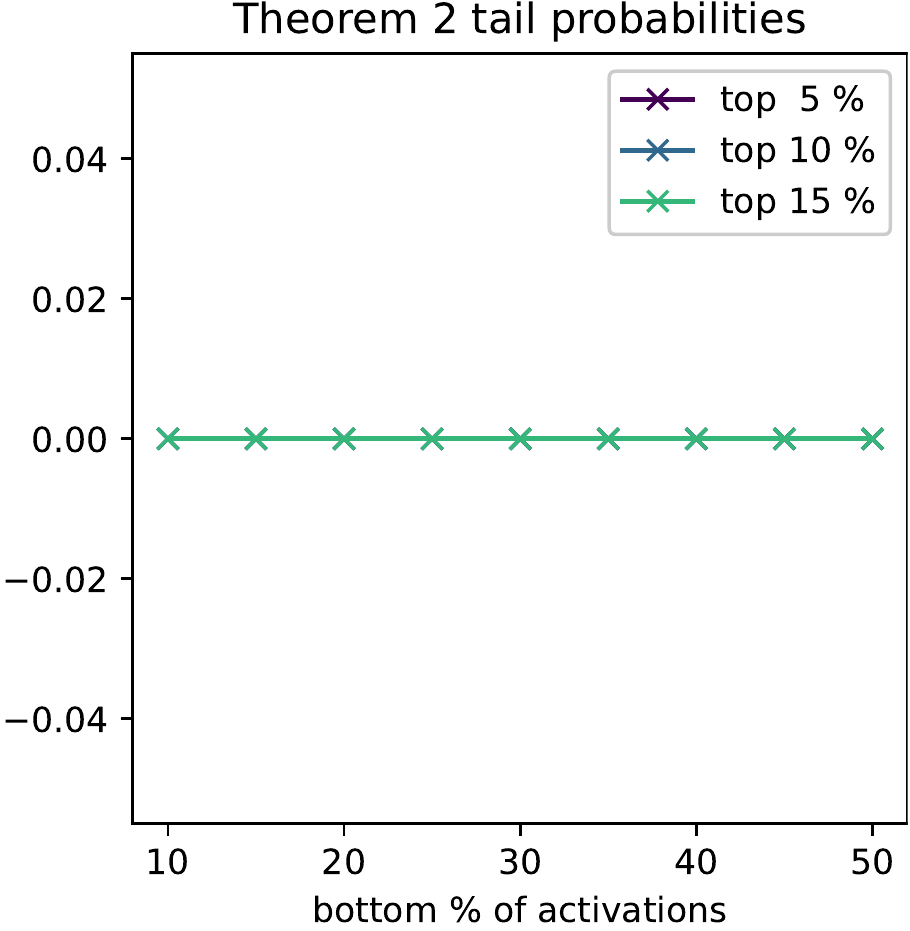}
}

\caption{\label{overtakingdensenets1} Lower probabilities support Theorem 2 better.}
\end{figure*}

\begin{figure*}[t!]
\centering     
\subcaptionbox{EfficientNet-B0 Level0}{
\includegraphics[width=0.31\linewidth]{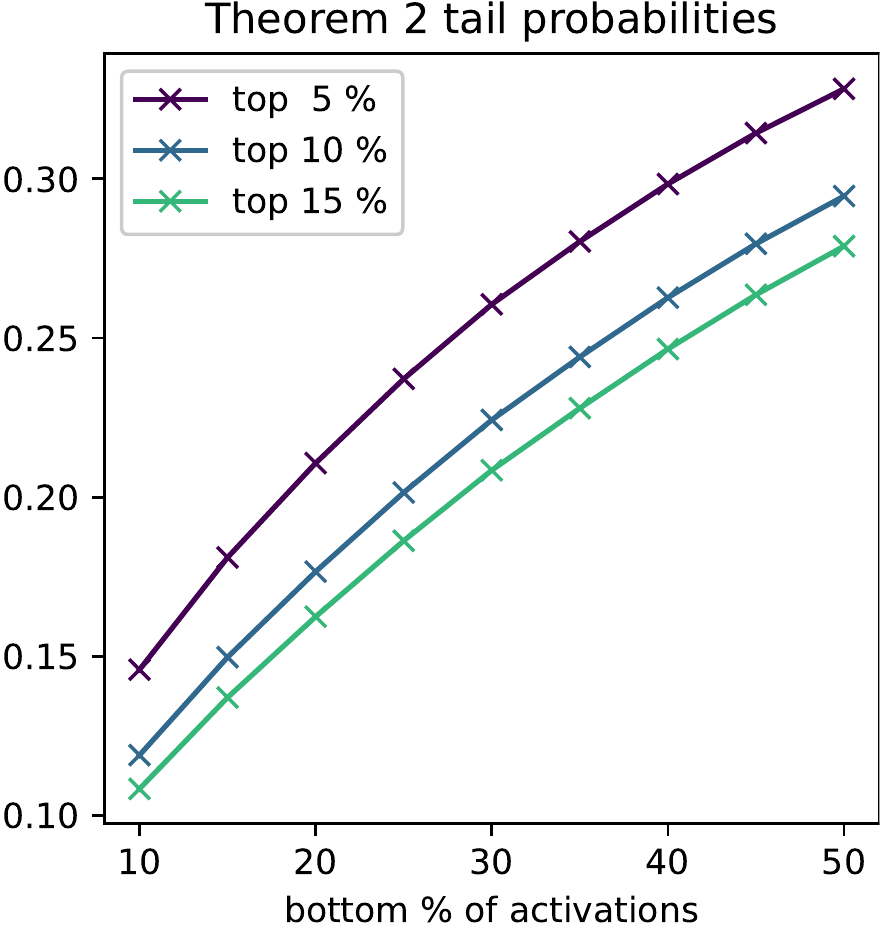}
}
\subcaptionbox{EfficientNet-B0 Level3}{
\includegraphics[width=0.31\linewidth]{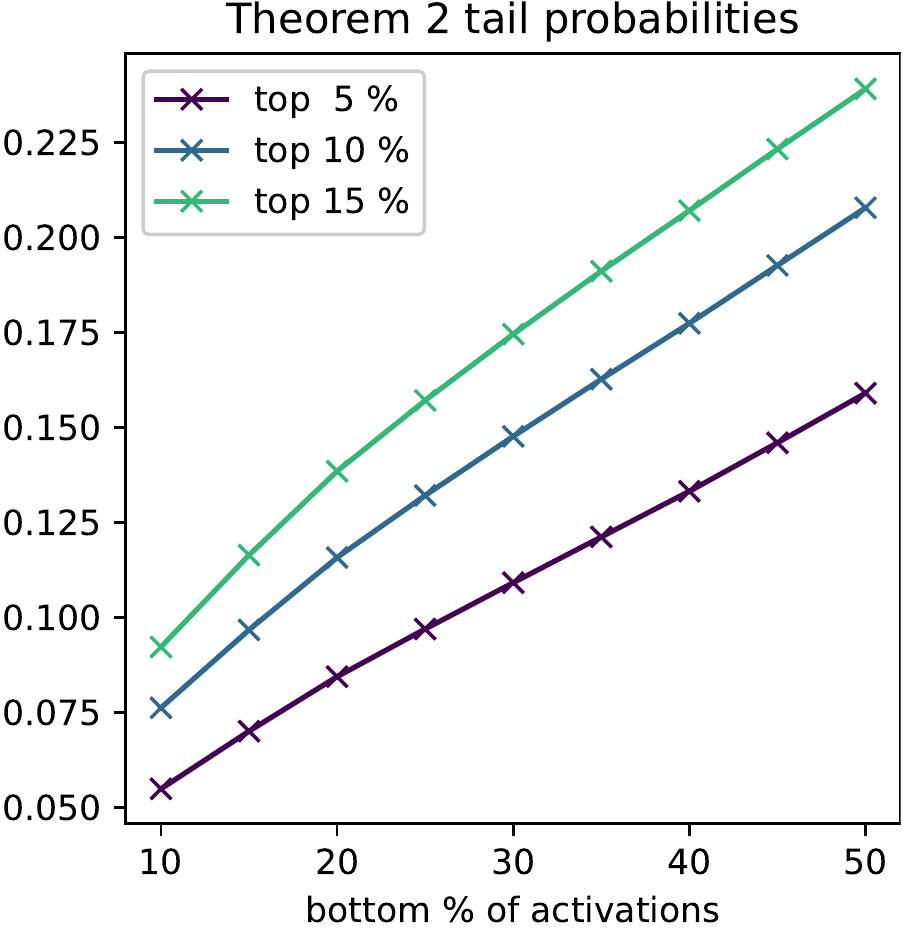}
}
\subcaptionbox{EfficientNet-B0 Level6}{
\includegraphics[width=0.31\linewidth]{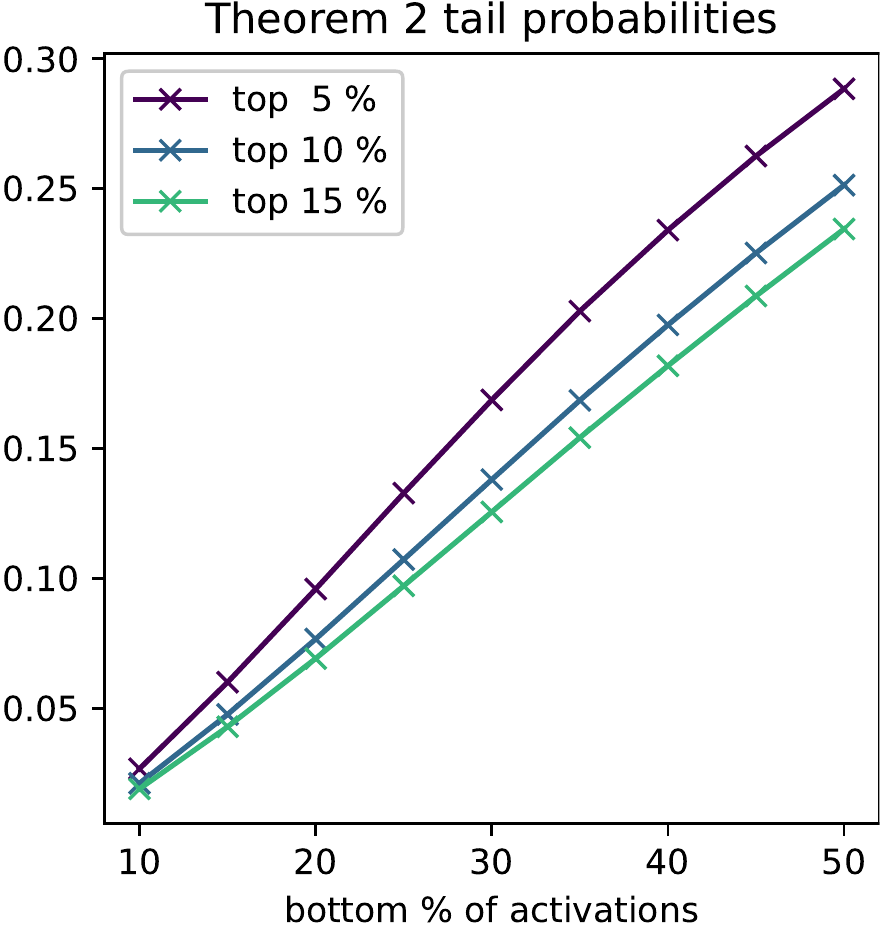}
}

\subcaptionbox{EfficientNet-B0 Level9}{
\includegraphics[width=0.31\linewidth]{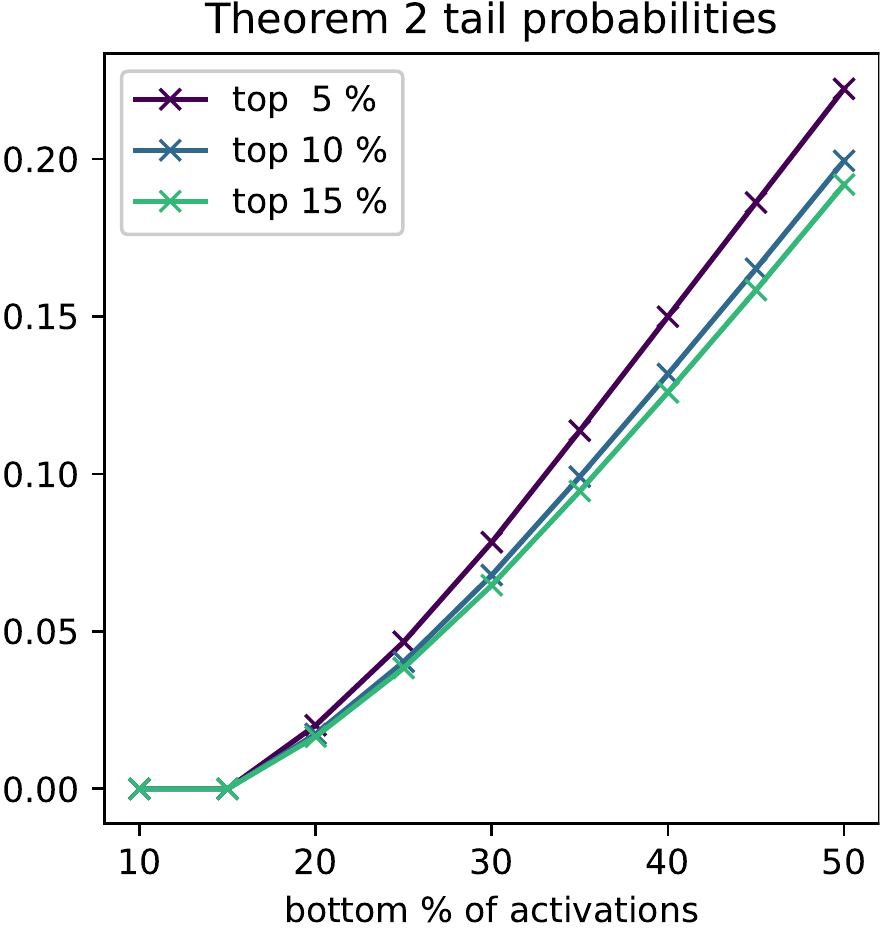}
}
\subcaptionbox{EfficientNet-B0 Level12}{
\includegraphics[width=0.31\linewidth]{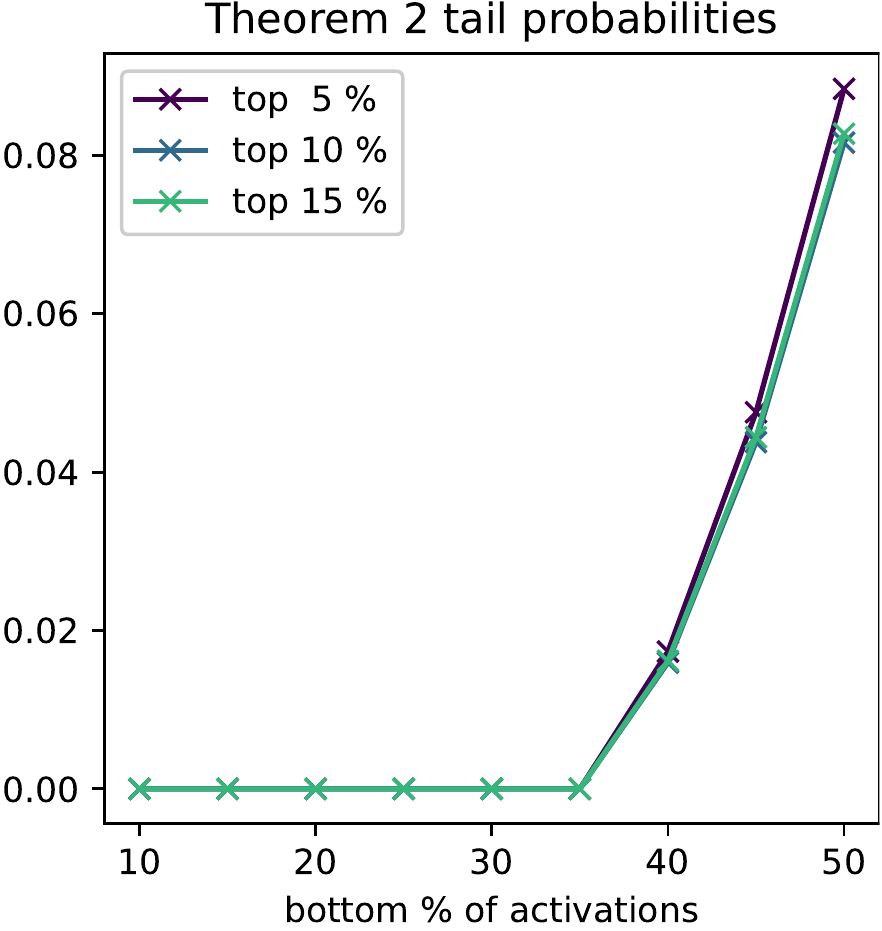}
}
\subcaptionbox{EfficientNet-B0 Level16}{
\includegraphics[width=0.31\linewidth]{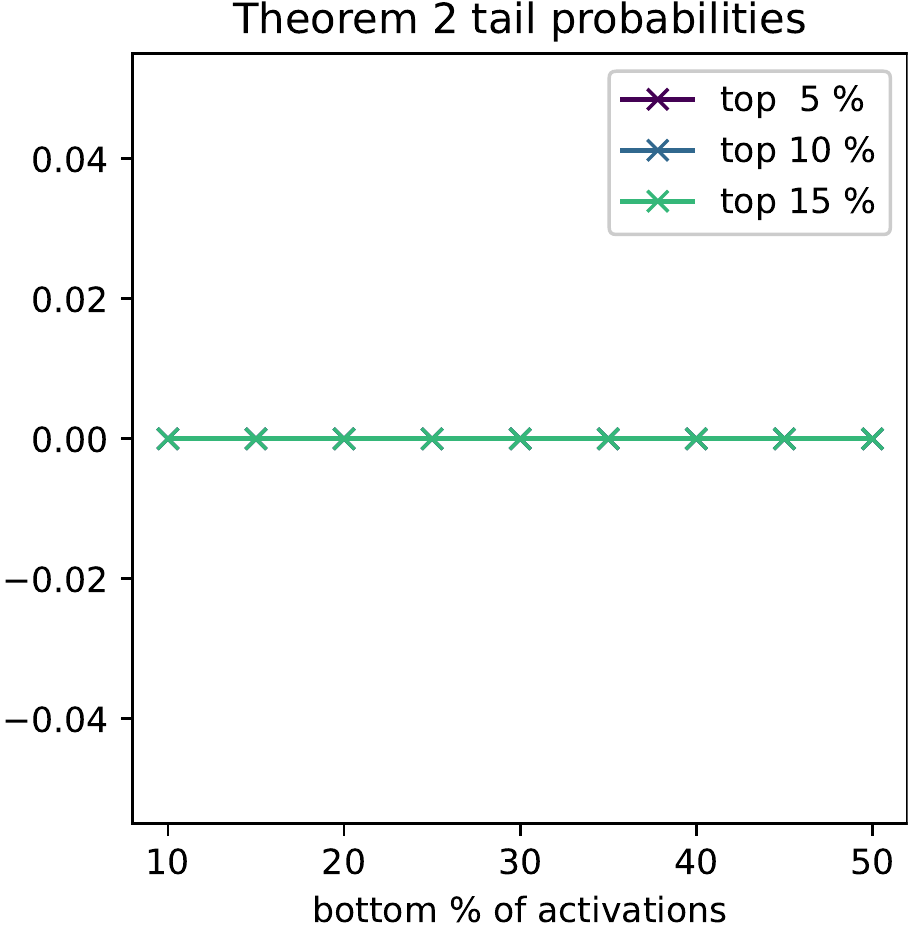}
}

\caption{\label{overtakingeffnets1} Lower probabilities support Theorem 2 better.}
\end{figure*}

\section{Activation Statistics}
\label{sec:actstats}

This section shows the fraction of non-positive activations. Results are shown in Figure \ref{fracnonpos}. One can see that for ResNet-50 and DenseNet-121, most layers have at least 30\% zero activations. The amount of nonpositive activations is less for the EfficientNet-B0, which makes sense as this is less wide than other architectures. From layer 11 onwards it has also at least 20\% zeros. Note that activations can get truly negative for the Efficientnet as a result of using the Swish activation function. Therefore seeing 100\% in layer 0,1,3,5 is not a mistake.

\begin{figure*}[htb]

\subcaptionbox{ResNet50}{
\includegraphics[width=0.3\linewidth]{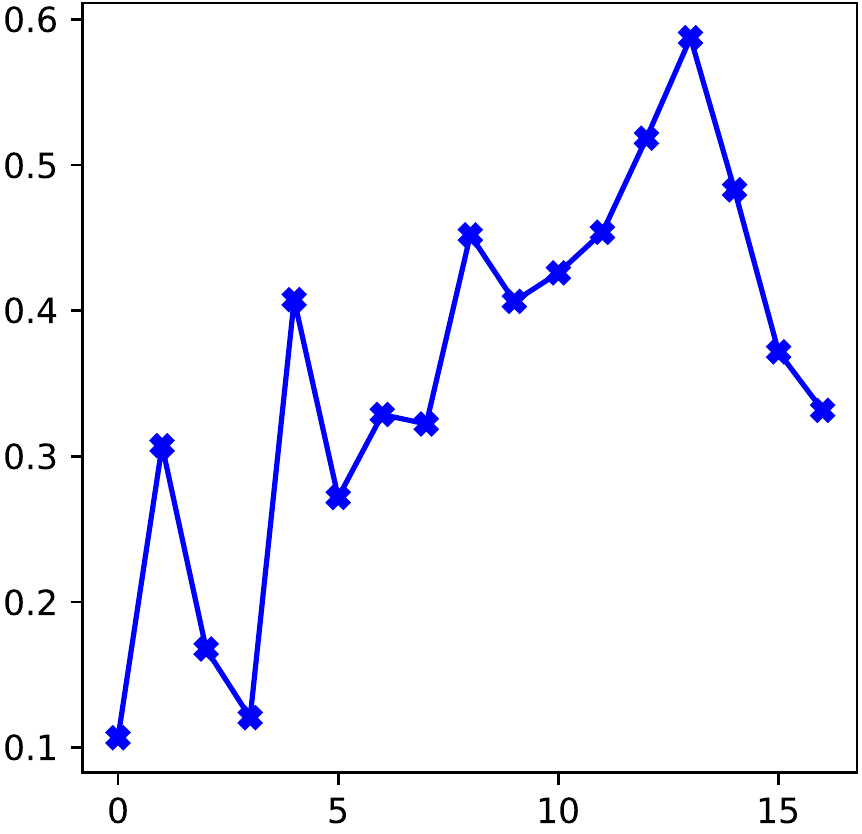}
}
\subcaptionbox{DenseNet-121}{
\includegraphics[width=0.3\linewidth]{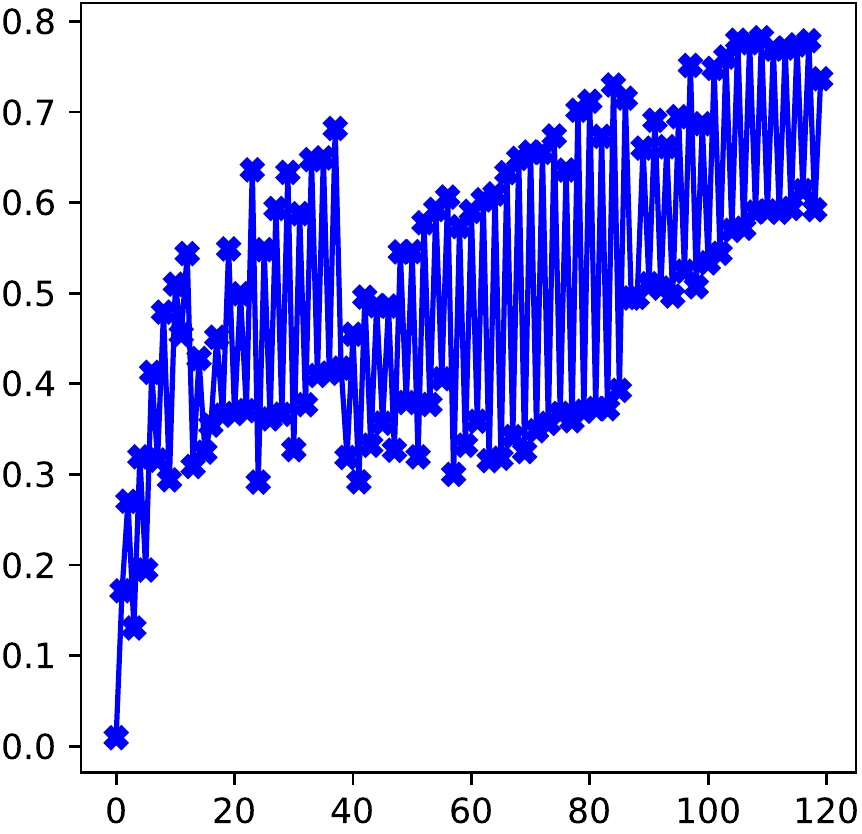}
}
\subcaptionbox{EfficientNet-B0}{
\includegraphics[width=0.3\linewidth]{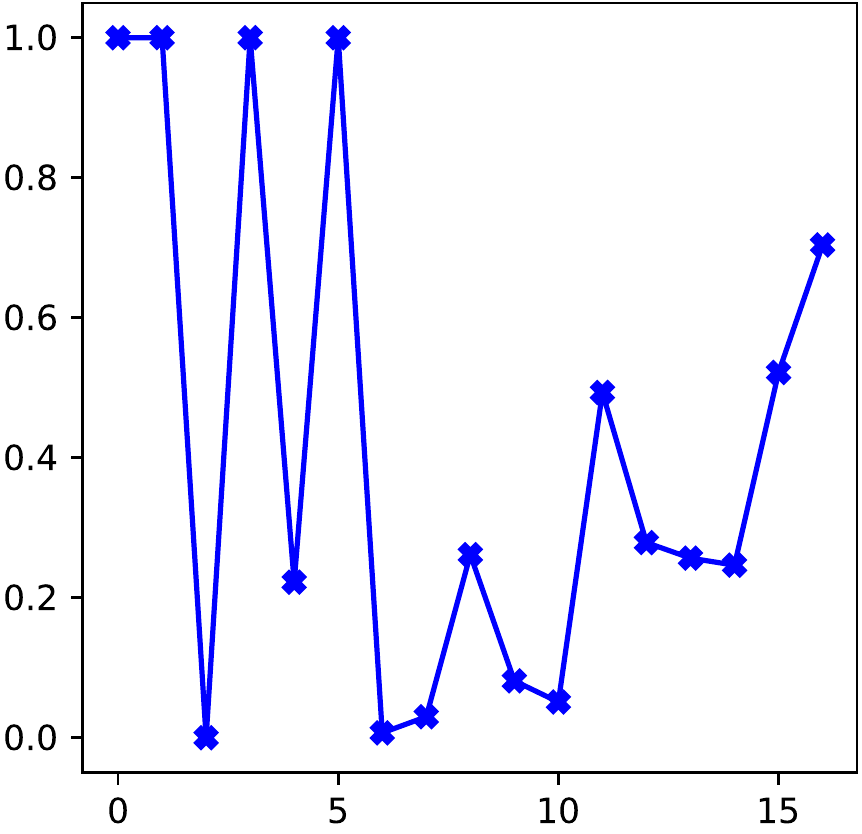}
}
\caption{\label{fracnonpos} Non-positive activations per layer. Higher values indicate a higher fraction of non-positive activations.}
\end{figure*}

\end{document}